\documentclass{article}



    \usepackage[preprint]{neurips_2025}



\usepackage[utf8]{inputenc} 
\usepackage[T1]{fontenc}    
\usepackage{url}            
\usepackage{booktabs}       
\usepackage{amsfonts}       
\usepackage{nicefrac}       
\usepackage{microtype}      
\usepackage{xcolor}         

\usepackage{microtype}
\usepackage{graphicx}
\usepackage{subfigure}
\usepackage{booktabs} 

\usepackage[framemethod=tikz]{mdframed}
\usepackage{tikz}
\usetikzlibrary{calc, arrows.meta, positioning}

\usepackage{nicefrac}

\usepackage{algorithm}
\usepackage{algorithmic}


\usepackage{graphicx}
\usepackage{subcaption}
\usepackage{standalone}
\usepackage{xspace}

\title{Beyond Scalar Rewards:\\An Axiomatic Framework for Lexicographic MDPs}

%

\definecolor{mylinkcolor}{RGB}{0,96,104}

\usepackage[backref=page,
    colorlinks=true,
    linkcolor=mylinkcolor,
    citecolor=mylinkcolor,
    filecolor=mylinkcolor,
    urlcolor=mylinkcolor]{hyperref}

\usepackage{backref}

\renewcommand*{\backref}[1]{}

\usepackage{amsmath}
\usepackage{amssymb}
\usepackage{mathtools}
\usepackage{amsthm}

\usepackage[nameinlink, capitalize]{cleveref}

\usepackage{amsfonts}
\usepackage{amsmath}
\usepackage{amssymb}
\usepackage{mathtools}
\usepackage{mdframed}
\usepackage{amsthm}
\usepackage{thmtools, thm-restate}
\usepackage{dsfont}
\usepackage{mleftright}
\usepackage{standalone}

\newcommand{\N}{\mathbb{N}}
\newcommand{\R}{\mathbb{R}}

\newcommand{\calA}{\mathcal{A}}

\newcommand{\calE}{\mathcal{E}}

\newcommand{\calO}{\mathcal{O}}

\newcommand{\calS}{\mathcal{S}}

\newcommand{\calU}{\mathcal{U}}

\newcommand{\bbP}{\mathbb{P}}

\newcommand{\bfr}{\mathbf{r}}

\newcommand{\dfn}[1]{\textit{#1}}
\newcommand{\defeq}{\coloneq}
\newcommand{\st}{\;\middle|\;}

\newcommand{\E}[1]{\mathbb{E}\mleft[#1\mright]}
\newcommand{\EE}[2]{\mathbb{E}_{#1}\mleft[#2\mright]}

\renewcommand{\P}[1]{\mathbb{P}\mleft(#1\mright)}

\newcommand{\set}[1]{\mleft\{#1\mright\}}

\DeclareMathOperator*{\argmax}{argmax}

\newcommand{\eps}{\varepsilon}

\newcommand{\ie}{\textit{i.e., }}
\newcommand{\eg}{\textit{e.g., }}

\declaretheoremstyle[
    headfont=\bfseries, 
    bodyfont=\normalfont\itshape,
    headpunct={},
    spacebelow=\parsep,
    spaceabove=\parsep,
    mdframed={
        linewidth=5pt,
        linecolor=blue!20,
        topline=false,
        bottomline=false,
        rightline=false,
        innerleftmargin=6pt,
        innerrightmargin=0pt,
        innerbottommargin=0pt,
        innertopmargin=4pt,
        skipabove=12pt}
]{framedstyle}

\declaretheorem[
    style=framedstyle,
    name=Theorem
]{theorem}

\theoremstyle{plain}

\newtheorem{corollary}{Corollary}
\theoremstyle{definition}
\newtheorem{definition}{Definition}
\newtheorem{assumption}{Assumption}
\theoremstyle{remark}

\newmdtheoremenv[
  roundcorner=4pt,
  linewidth=0pt,
  backgroundcolor=brown!10,
  innertopmargin=12pt,
  innerbottommargin=6pt,
  innerleftmargin=6pt,
  innerrightmargin=6pt
]{axiom}{Axiom}

\newmdtheoremenv[
  roundcorner=4pt,
  linewidth=0pt,
  backgroundcolor=gray!10,
  innertopmargin=12pt,
  innerbottommargin=6pt,
  innerleftmargin=6pt,
  innerrightmargin=6pt
]{example}{Example}

\crefname{axiom}{axiom}{axioms}

\usepackage[colorinlistoftodos,bordercolor=orange,backgroundcolor=orange!20,linecolor=orange,textsize=footnotesize,disable]{todonotes}

\newcommand{\pgeq}{\succsim}
\newcommand{\pg}{\succ}
\newcommand{\peq}{\approx}
\newcommand{\pneq}{\not\approx}

\newcommand{\pl}{\prec}
\newcommand{\lex}{\mathrm{lex}}
\DeclareMathOperator*{\lexmax}{lex\ max}
\DeclareMathOperator*{\lexmin}{lex\ min}

\newcommand{\ltp}[1]{\mathcal{L}_+^{#1 \times #1}}

\renewcommand{\O}{\calO}
\newcommand{\Osafe}{\calO_\mathrm{safe}}
\newcommand{\Ounsafe}{\calO^\dagger}

\newcommand{\Eunsafe}{\calE^\dagger}
\newcommand{\Eterm}{\calE_\mathrm{term}}

\newcommand{\xd}{o^\dagger}
\newcommand{\pspace}{\Delta(\O)}
\newcommand{\prel}{(\pgeq, \pspace)}
\newcommand{\safepspace}{\Delta(\Osafe)}

\newcommand{\completenessref}{\hyperref[ax:completeness]{Completeness}\xspace}
\newcommand{\transitivityref}{\hyperref[ax:transitivity]{Transitivity}\xspace}
\newcommand{\independenceref}{\hyperref[ax:independence]{Independence}\xspace}
\newcommand{\continuityref}{\hyperref[ax:continuity]{Continuity}\xspace}
\newcommand{\safetyfirstref}{\hyperref[ax:safety]{Safety First}\xspace}
\newcommand{\memorylessnessref}{\hyperref[ax:memoryless]{Memorylessness}\xspace}
\newcommand{\tempindiffref}{\hyperref[ax:temp_indiff]{Temporal $\gamma$-Indifference}\xspace}

\author{
    Mehran Shakerinava$^{1\,2\,}$\thanks{Correspondence to Mehran Shakerinava <mehran.shakerinava@mail.mcgill.ca>.}\phantom{*}, Siamak Ravanbakhsh$^{1\,2}$, Adam Oberman$^{2\,3}$ \\
    $^1$School of Computer Science, McGill University $^2$Mila -- Quebec AI Institute \\
    $^3$Department of Mathematics and Statistics, McGill University
}

\begin{document}

\maketitle

\begin{abstract}
Recent work has formalized the reward hypothesis through the lens of expected utility theory, by interpreting reward as utility. Hausner's foundational work showed that dropping the continuity axiom leads to a generalization of expected utility theory where utilities are \textit{lexicographically} ordered \textit{vectors} of arbitrary dimension. In this paper, we extend this result by identifying a simple and practical condition under which preferences cannot be represented by scalar rewards, necessitating a 2-dimensional reward function. We provide a full characterization of such reward functions, as well as the general $d$-dimensional case, in Markov Decision Processes (MDPs) under a memorylessness assumption on preferences. Furthermore, we show that optimal policies in this setting retain many desirable properties of their scalar-reward counterparts, while in the Constrained MDP (CMDP) setting -- another common multiobjective setting -- they do not.
\end{abstract}

\section{Introduction}
\label{sec:intro}

Framing decision-making as an optimization problem typically begins with specifying one or more utility functions and defining an objective with respect to these utilities. The reward hypothesis of Reinforcement Learning (RL) advocates for ``maximization of the expected value of the cumulative sum of a received scalar signal''~\citep{suttonwebRLhypothesis, sutton2018reinforcement, littmanwebRLhypothesis}. Other approaches include specifying several utility functions with the objective of finding a solution whose expected utilities are Pareto optimal. In the Constrained Markov Decision Process (CMDP) framework~\citep{altman_constrained_1999}, the goal is to maximize the expectation of a primary utility subject to constraints on the expected values of auxiliary utilities. Another approach is \emph{lexicographic optimization}, where utilities are ordered by priority, and the objective is to lexicographically maximize the expected utility vector~\citep{gabor1998multi}.

It is common to pick one such objective based on heuristics, and then study its properties and propose algorithms for it. \textit{The axiomatic approach goes in the opposite direction} and characterizes a family of utility functions and an objective that correspond to a set of assumptions (axioms) on our preferences. In other words, it shows the \textit{sufficiency} of an objective in capturing certain properties. For instance, it is well-known that a lexicographic order cannot be scalarized, \eg an order-preserving mapping from a lexicographically ordered square to a line is not possible even though both sets have the same cardinality. In the axiomatic expected utility framework, one can show that a utility function that cannot be scalarized can always be captured by a lexicographic utility function~\citep{hausner1954multidimensional}.

In this work, motivated by potential applications in AI safety, we show how this axiomatic approach can lead to lexicographic objectives in MDPs. A lexicographic objective assumes that an infinitesimal increase in the first-priority objective is preferred to a large increase in the second-priority objective. Such settings might appear when there is a critical safety requirement that should be prioritized above all else. Another common and practical example is when the first priority is to achieve some goal while the second priority is to minimize the time that it takes. Such objectives are also reminiscent of Asimov's Three Laws of Robotics~\citep{asimov1942runaround} where the highest priority objective is to ``not injure a human being or, through inaction, allow a human being to come to harm.'' An illustrative example of a lexicographic objective is presented in \cref{fig:safe_planning}. 

\begin{figure}[t]
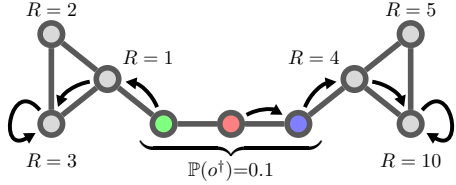

    \centering
    \includestandalone[width=0.45\textwidth]{tikz/safe_planning}
    \caption{An example of lexicographic planning with time-horizon $T=4$ steps. The diagram depicts a $0.1$ probability of unsafety when transitioning from a green, red, or blue state. It also depicts the reward from gray states as $R$ and the optimal policy as black arrows. Starting from the green state, the optimal policy is to forgo the high reward on the right side in favor of increased safety and go left. Starting from the red state, since the safety of going left and right is the same, the optimal policy is to go right to get more cumulative reward. Importantly, decisions have to be based on \textit{two} quantities: the probability of safety and expected cumulative reward.}
    \label{fig:safe_planning}
\end{figure}

\subsection{Outline and Summary of Results}
\label{sec:outline}

We begin by reviewing the relevant literature (\cref{sec:related_works}), followed by a brief summary of von Neumann-Morgenstern (vNM) expected utility theory and Hausner's lexicographic extension of it (\cref{sec:background}). We then extend Hausner's result to the sequential decision-making setting (\cref{sec:seq_lex_eut}), which involves formally specifying a setting where outcomes are sequences of abstract events, and introducing an axiom to structure preferences. Since Hausner's theorem does not specify the dimensionality of utility vectors, we study a simple setting that naturally leads to a $2$-dimensional lexicographic utility function (\cref{sec:unsafe_utility}), and extend this to the sequential setting (\cref{sec:seq_single_unsafe}). This yields a simple and interpretable recursive equation for the utility of event sequences. Finally, we study the properties of optimal policies in lexicographic MDPs, highlighting similarities with scalar-reward MDPs and contrasting them with the CMDP framework (\cref{sec:optimal_policy}). We conclude with a discussion of limitations and future work (\cref{sec:limitations_and_future_work}) and concluding remarks (\cref{sec:conclusion}).

\section{Related Works}\label{sec:related_works}

\paragraph{Expected Utility Theory.}
Expected utility theory originated with \citet{bernoulli_combined} and was formalized axiomatically by \citet{vonneumann1947theory} in the context of game theory. This axiomatic approach characterizes when and why maximizing expected utility is justified. Subsequent work refined these foundations and extended them to more general settings, such as alternative formulations of the axioms \citep{jensen_bernoullian} and generalization from finite sets to mixture spaces \citep{mixturespace}. A comprehensive account is provided by \citet{Fishburn1982}, which also contains a relevant chapter on lexicographic expected utility, due to \citet{hausner1954multidimensional}. In the lexicographic setting, reducing the dimensionality of utility vectors requires additional assumptions, which are often technical and unintuitive. Our work contributes a natural and interpretable condition that leads to $2$-dimensional lexicographic utilities.

\paragraph{Sequential Decision-Making and Expected Utility Theory.}
Classical expected utility theory typically applies only to final outcomes, abstracting away the sequential nature of decision-making and preventing utility assignment to individual interactions, as is common in MDP and RL settings. Recent works have extended vNM rationality axioms to sequential decision-making, motivated by variable discount factors~\citep{pitis2019rethinking} and formalizing the reward hypothesis~\citep{shakerinava2022utility, bowling2023settling}. Following this axiomatic approach, we extend lexicographic expected utility theory to sequential settings, deriving structured reward functions under minimal and interpretable assumptions.

\paragraph{Lexicographic Decision-Making.}
Lexicographic optimization in Multi-Objective Reinforcement Learning (MORL) has been previously studied, notably by \citet{gabor1998multi}, and extended in \citet{skalse2022lexicographic}, which present a family of both action-value and policy gradient algorithms for lexicographic RL with theoretical convergence results and empirical performance on benchmarks. In other works, \citet{wray2015multi} provide a lexicographic value iteration algorithm and prove convergence under the assumption that each objective is allowed some acceptable level of slack. \citet{khalfi2017lexicographic} propose algorithms for finding a lexicographic optimal policy in possibilistic MDPs. \citet{hahn2021model} investigate lexicographic $\omega$-regular objectives within formal verification, proposing a reduction that enables model-free RL for prioritized temporal logic specifications. 

\paragraph{Multi-Objective Decision-Making and AI Safety.}
The standard RL paradigm assumes that a single scalar reward is sufficient for specifying goals~\citep{silver2021reward}. This view has been challenged by \citet{vamplew2022scalar}, who argue that in safety-critical settings, scalar rewards can incentivize unsafe or undesirable behaviors. Lexicographic objectives have been proposed in domains such as autonomous vehicles, where safety must take strict precedence over other considerations~\citep{zhang2022lexicographic}. \citet{omohundro2008basic} warns that unbounded reward maximization may give rise to dangerous instrumental drives. While scalarization methods are commonly used to combine multiple objectives, they suffer from well-known theoretical limitations~\citep{vamplew2008limitations}. Multi-objective RL methods~\citep{hayes2022practical} avoid scalarization but remain algorithmically and theoretically more challenging than the scalar case.

\section{Background}\label{sec:background}

\subsection{Scalar Expected Utility Theory}\label{sec:background_eut}

The expected utility theory of \citet{vonneumann1947theory} provides an axiomatic treatment of decision-making under uncertainty. Let $\O$ denote the set of outcomes. The result of a decision is always an element from this set. We refer to a distribution over outcomes as a \dfn{lottery} and we denote the space of lotteries as $\pspace$. The player is faced with a number of such lotteries and has to pick one. After a lottery has been selected, the player receives an outcome sampled according to the lottery's distribution.

The player supplies their preferences in the form of a relation on the space of lotteries $\prel$. Based on this relation, we can define another relation $\pg$ as $p \pg q \defeq \text{not}(q \pgeq p)$. We also define the relation $\peq$ as $p \peq q \defeq (p \pgeq q \text{ and } q \pgeq p)$ and we say that $p$ and $q$ are \dfn{indifferent}. Its negation is denoted $\pneq$.

We assume that the player's preferences satisfy von Neumann and Morgenstern's four axioms of rationality. These axioms are defined below, and we will refer to them as the vNM axioms.

\begin{axiom}[Completeness]\label{ax:completeness}
    For all $p, q \in \pspace$,
    \begin{equation*}
    p \pgeq q \text{ or } q \pgeq p.
    \end{equation*}
\end{axiom}

\begin{axiom}[Transitivity]\label{ax:transitivity}
    For all $p, q, r \in \pspace$,
    \begin{equation*}
    p \pgeq q \text{ and } q \pgeq r \implies p \pgeq r.
    \end{equation*}
\end{axiom}


\begin{axiom}[Independence]\label{ax:independence}
For all $p, q, r \in \pspace$ and $\alpha \in [0, 1)$,
\begin{equation*}
\alpha p + (1 - \alpha) q \pgeq \alpha p + (1 - \alpha) r \iff q \pgeq r.
\end{equation*}
\end{axiom}

The lottery $\alpha p + (1 - \alpha) q$ that appears in the statement of \independenceref can be thought of as a compound lottery constructed by first tossing a biased coin with probability $\alpha$ of landing heads. The outcome of the coin toss determines whether we get an outcome sampled from lottery $p$ (heads) or $q$ (tails). \independenceref states that comparing compound lotteries constructed with the same biased coin and with the same lottery for heads, is equivalent to comparing the corresponding lotteries for tails. From an algebraic standpoint, \independenceref can be thought of as a cancellation law for the comparison of lotteries.

\begin{axiom}[Continuity]\label{ax:continuity}
For all $p, q, r \in \pspace$,
\begin{equation*}
p \pgeq q \pgeq r \implies \exists \alpha \in [0, 1],\ \alpha p + (1 - \alpha) r \peq q.
\end{equation*}
\end{axiom}

The \continuityref axiom essentially states that, as the probabilities of a lottery vary, our valuation of the lottery changes smoothly. As we will see, it is responsible for the sufficiency of \textit{scalar} utilities.

Before presenting the expected utility theorem, we will need to define utility and what it means for a utility function to be linear.

\begin{definition}\label{def:utility}
A utility function for $\prel$ is any function $u: \pspace \to \R$ such that, for all $p, q \in \pspace$,
\begin{equation}
u(p) \geq u(q) \iff p \pgeq q.    
\end{equation}
\end{definition}

\begin{definition}\label{def:linu}
A utility function is said to be linear if for all $p \in \pspace$,
\begin{equation}
u(p) = \sum_{o \in \O} p(o) u(o).\footnote{We will occasionally abuse notation by writing $o$ (an element of $\mathcal{O}$) when we actually mean the Dirac delta distribution centered at $o$ (an element of $\pspace$). This simplification will typically occur when adding an outcome to a lottery or evaluating its utility.}
\end{equation}
\end{definition}

We are now ready to state the von Neumann-Morgenstern (vNM) expected utility theorem.

\vspace{-0.5em}
\begin{theorem}[\citet{vonneumann1947theory}]\label{thm:vnm}
    A relation $\prel$ satisfies the vNM axioms if and only if there exists a linear utility function $u: \Delta(\calO) \to \R$. Moreover, $u$ is unique up to positive affine transformations, \ie $u \mapsto au + b$, where $a > 0$.
\end{theorem}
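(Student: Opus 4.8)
The plan is to prove the biconditional in two directions and then address the uniqueness claim, with the reverse direction (\emph{axioms} $\Rightarrow$ \emph{linear utility}) being the substantive one.

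For the forward direction, suppose a linear $u$ exists. \completenessref and \transitivityref are inherited from the total order on $\mathbb{R}$. For \independenceref, linearity gives $u(\alpha p + (1-\alpha)q) = \alpha u(p) + (1-\alpha) u(q)$, so for $\alpha < 1$ the common terms $\alpha u(p)$ cancel and $u(\alpha p + (1-\alpha) q) \geq u(\alpha p + (1-\alpha) r) \iff u(q) \geq u(r)$. For \continuityref, if $p \pgeq q \pgeq r$ then $u(p) \geq u(q) \geq u(r)$, and solving the linear equation $\alpha u(p) + (1-\alpha) u(r) = u(q)$ for $\alpha \in [0,1]$ produces the required indifferent mixture.

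For the reverse direction I would first record two consequences of \independenceref: a strict version, $\alpha p + (1-\alpha) q \pg \alpha p + (1-\alpha) r \iff q \pg r$ for $\alpha<1$ (obtained by contraposition using \completenessref), and the fact that indifferent lotteries may be substituted inside a mixture. The key lemma is \emph{monotonicity}: if $p \pg q$ and $0 \le \beta < \alpha \le 1$, then $\alpha p + (1-\alpha) q \pg \beta p + (1-\beta) q$; this follows by writing $\alpha p + (1-\alpha) q = \gamma p + (1-\gamma)(\beta p + (1-\beta) q)$ with $\gamma = (\alpha-\beta)/(1-\beta) \in (0,1]$ and applying strict \independenceref twice. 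If all lotteries are indifferent, $u \equiv 0$ works; otherwise, finiteness of $\O$ together with \completenessref and \transitivityref yields a best outcome $b$ and a worst outcome $w$ with $b \pg w$, and an induction on support size (using \independenceref) shows $b \pgeq p \pgeq w$ for every lottery $p$.

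Then, for each $p$, \continuityref supplies an $\alpha \in [0,1]$ with $\alpha b + (1-\alpha) w \peq p$; monotonicity guarantees this $\alpha$ is unique, so $u(p) \defeq \alpha$ is well defined, and monotonicity again gives $u(p) \ge u(q) \iff p \pgeq q$. Linearity follows by substituting the indifferent representations of $p$ and $q$ into $\beta p + (1-\beta) q$ and collecting the coefficients of $b$ and $w$. Finally, for uniqueness, given two linear representations $u, v$ I would solve $p \peq \lambda b + (1-\lambda) w$ for $\lambda$ using linearity, obtaining $v = a u + c$ with $a = (v(b)-v(w))/(u(b)-u(w)) > 0$. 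The main obstacle is the monotonicity lemma and the well-definedness of $u$: this is precisely where \independenceref and \continuityref combine to force a \emph{scalar} representation, and handling the mixture-coefficient bookkeeping correctly (including the boundary cases $\alpha, \beta, \gamma \in \{0,1\}$) is the delicate part.
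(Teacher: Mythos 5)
The paper does not prove this theorem itself; it is stated as a classical result with the proof deferred to the cited references (von Neumann--Morgenstern, Fishburn, Maschler--Solan--Zamir). Your sketch is the standard calibration argument found in those references---forward direction by direct verification, reverse direction via strict \independenceref, the monotonicity lemma, best/worst outcomes $b \pg w$, and the unique $\alpha$ with $\alpha b + (1-\alpha)w \peq p$ serving as the utility---and it is correct as outlined, with the mixture-coefficient identity $\gamma = (\alpha-\beta)/(1-\beta)$ checking out.
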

For proofs, see \citet{vonneumann1953theory}, \citet{Fishburn1982}, or \citet{maschler_solan_zamir_2013}.

The vNM theorem identifies settings where decision-making can be optimized by maximizing the expected value of a scalar utility function. In other words, the vNM theorem is the formal basis of the well-known maximum expected utility principle.

We mention two advantages of expected utility theory over other methods for specifying rewards. First, it separates reward specification from the mechanics of the environment, which is particularly useful when the environment is complex or unknown. Second, it enables reliable comparison of suboptimal policies. These advantages, among others, make it an ideal candidate for reward specification.
\vspace{-0.5em}
\subsection{Lexicographic Expected Utility Theory}\label{sec:lex_eut}

An important result due to \citet{hausner1954multidimensional} is that, if we forgo \continuityref, lotteries can still be compared in terms of expected utility, but the utilities will be vectors and the comparison will be lexicographic. In a lexicographic comparison, entries are compared from first to last and the first differing entry determines the order. The rest of the entries are irrelevant. The mathematical definition is as follows.
\begin{definition}\label{def:ord_lex}
For all $u, v \in \R^d$,
\begin{equation}\label{eq:ord_lex}
u >_\lex v \defeq \mleft(\exists k \in [d], u_1 = v_1, ..., u_{k-1} = v_{k-1}, \text{ and } u_k > v_k \mright).
\end{equation}	
\end{definition}

To present the lexicographic expected utility theorem we first define what a lexicographic utility function is.

\begin{definition}
A lexicographic utility function for $\prel$ is any function $u: \pspace \to \R^d$ such that, for all $p, q \in \pspace$,
\begin{equation}
    u(p) \geq_\lex u(q) \iff p \pgeq q.
\end{equation}
\end{definition}

A \dfn{linear} lexicographic utility function is defined in the same way as \cref{def:linu} with addition and scalar multiplication interpreted as operations on a vector space.

We let $\ltp{d}$ be the set of $d \times d$ lower triangular matrices with positive diagonal entries, as formally described by \cref{eq:lower_triangular_pos}. This set will be used in the specification of affine transformations that retain lexicographic ordering.
\begin{equation}\label{eq:lower_triangular_pos}
\ltp{d} \defeq \set{A \in \R^{d \times d} \st
    A_{ij} = 0 \text{ for } i < j,\ 
    A_{ii} > 0 \text{ for all } i \in [d]
}
\end{equation}

We now state Hausner's lexicographic expected utility theorem.

\begin{theorem}[\citet{hausner1954multidimensional}]\label{thm:hausner}
   A relation $(\pgeq, \Delta(\calO))$ satisfies \completenessref, \transitivityref, and \independenceref if and only if there exist $d \in \N$ and a $d$-dimensional linear lexicographic utility function $u: \Delta(\calO) \to \R^d$. Moreover, $u$ is unique up to transformations of the form $u \mapsto Au + b$, where $A \in \ltp{d}$ and $b \in \R^d$.
\end{theorem}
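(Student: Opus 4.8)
The plan is to prove both directions, spending little effort on the direction that assumes a linear lexicographic utility and derives the axioms, and concentrating on the converse and on uniqueness. For the easy direction I would assume a linear lexicographic $u$ and check the three axioms directly: \completenessref and \transitivityref hold because $\geq_\lex$ of \cref{def:ord_lex} is a total order on $\R^d$; and \independenceref follows from linearity, since $u(\alpha p + (1-\alpha)q) - u(\alpha p + (1-\alpha)r) = (1-\alpha)\big(u(q) - u(r)\big)$, and scaling a vector by $(1-\alpha) > 0$ preserves its sign under $>_\lex$, which is exactly the required cancellation.

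For the direction that assumes the axioms, the idea is to convert $\prel$ into an ordered vector space and then invoke a structure theorem. First I would embed $\pspace$ into the real vector space $V$ of finitely supported signed measures on $\O$, working with the finite-dimensional difference space when $\O$ is finite so that $d \in \N$ is automatic. Using \transitivityref I would show $\peq$ is an equivalence relation, and using \independenceref (the cancellation law) I would show that the weakly-preferred directions form a convex cone $C = \set{v \in V : v \pgeq 0}$ closed under addition and nonnegative scaling, with $C \cup (-C) = V$ by \completenessref. Its lineality space $N = C \cap (-C)$ is the subspace of indifference directions, and passing to $V/N$ yields a totally ordered vector space whose positive cone is strict, i.e.\ $(C/N) \cap -(C/N) = \set{0}$.

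The core step is the structure theorem for such spaces. On the positive elements of $V/N$ I would introduce the \emph{Archimedean equivalence} $x \sim y$ (there exist $m,n>0$ with $x \pl m y$ and $y \pl n x$), whose classes are linearly ordered and, by finite dimensionality, finite in number; these induce a flag $\set{0} = V_0 \subsetneq V_1 \subsetneq \cdots \subsetneq V_d = V/N$ of order-convex subspaces, where $V_k$ collects the directions dominated by the $k$ coarsest levels. On each successive quotient $V_k/V_{k-1}$ the induced order is Archimedean, so \continuityref is effectively restored at that level; I would then obtain a scalar order-representing linear functional $u_k$ either by Hölder's theorem or, more in keeping with the present development, by applying the already-established vNM theorem (\cref{thm:vnm}) to the Archimedean quotient. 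Assembling $u = (u_1,\dots,u_d)$ and precomposing with the embedding produces a linear map with $u(p) \geq_\lex u(q) \iff p \pgeq q$. I expect this structure theorem --- proving the flag is finite and that the order agrees level-by-level with $>_\lex$ --- to be the main obstacle, as it carries the genuinely non-elementary Hahn/Hölder-type content that the statement hides.

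Finally, for uniqueness, suppose $u$ and $u'$ both represent $\pgeq$. Because the flag $V_1 \subset \cdots \subset V_d$ is intrinsic to the order (it is defined purely from $\pgeq$ through Archimedean domination), both coordinate systems must respect it, so the linear part of the change of variables preserves a common complete flag and is therefore lower triangular; order-preservation within each level forces the diagonal entries to be positive, so the matrix lies in $\ltp{d}$, while the freedom in choosing the zero of each $u_k$ contributes the additive $b \in \R^d$. This recovers exactly the transformation group $u \mapsto Au + b$ with $A \in \ltp{d}$ asserted in the theorem.
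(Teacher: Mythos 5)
The paper gives no proof of this theorem --- it is imported verbatim from \citet{hausner1954multidimensional}, with \citet{Fishburn1982} cited for proofs --- so there is no in-paper argument to compare against. Your sketch follows essentially the same route as those references: embed $\pspace$ in the difference space of signed measures, use \independenceref{} to obtain a well-defined translation-invariant positive cone, quotient by the indifference subspace to get a totally ordered vector space, and apply the Hahn/H\"older-type structure theorem (via Archimedean equivalence classes and the induced flag) to realize the order as $\geq_\lex$ on $\R^d$; the uniqueness argument via the intrinsic flag forcing $A \in \ltp{d}$ is also the standard one. You correctly locate the non-elementary content in the structure theorem. Three small points worth tightening: (i) well-definedness of ``$p-q \pgeq 0$'' as a property of the difference alone, and closure of the cone under addition, both need explicit verification and are precisely where the cancellation form of \independenceref{} is consumed; (ii) finiteness of $d$ genuinely requires $\O$ finite (or the lotteries to span a finite-dimensional space) --- Hausner's general result for infinite $\O$ replaces $\R^d$ by a lexicographic function space over a totally ordered index set, so the paper's ``$d \in \N$'' is itself an implicit finiteness assumption you are right to flag; (iii) your description of the flag (``$V_k$ collects the directions dominated by the $k$ coarsest levels'') is inverted as written --- literally every direction is dominated by the coarsest class, so $V_1$ would be all of $V/N$; you want $V_k$ spanned by the $k$ \emph{finest} (most infinitesimal) classes, so that $V_{d-1} = \ker u_1$ and flag-preservation yields lower-triangular $A$ in the paper's convention. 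None of these affect the soundness of the overall plan.
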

For proofs see \citet{hausner1954multidimensional} or \citet{Fishburn1982}. 

Note that $d = 1$ captures the setting where \continuityref is satisfied and the vNM theorem applies, whereas $d > 1$ captures settings where \continuityref is violated. Remarkably, the theorem not only recovers the well-known fact that lexicographic orders cannot be scalarized, but also shows that, within the expected utility framework, any non-scalarizable utility function must admit a lexicographic representation.

\section{Sequential Lexicographic Expected Utility}\label{sec:seq_lex_eut}

We now extend lexicographic expected utility to the sequential decision-making setting. In this setting, an agent sequentially collects utility instead of only receiving an outcome with some utility at the end of the interaction. It is common to use the term reward instead of utility, so we will be using these terms interchangeably.

We model the agent's interaction with the environment as a Markov Decision Process (MDP) with the modification that the agent observes events from a set $\calE$ instead of rewards. More formally, we assume that the MDP is equipped with a conditional probability distribution $\bbP: \calS \times \calA \to \Delta(\calS \times \calE)$ of next state and event given current state and action. This setting is very general since events are allowed to be any stochastic function of current state, action, and next state. The set of outcomes in this setting corresponds to sequences of events, \ie $\calO = \calE^* \defeq \set{\eps} \cup \bigcup_{i \in \mathbb{N}} \calE^i$, where $\eps$ is the empty sequence. Rewards emerge as a result of applying expected utility theory to a given preference relation over distributions of sequences of events $(\pgeq, \Delta(\calE^*))$. 

Without any additional assumptions, the preference relation is unstructured, overlapping sequences are treated as entirely independent entities, and the utility of any two sequences is independently determined. We will, therefore, introduce a simple and reasonable axiom and show that it leads to utility functions that take a simple mathematical form.

But first, we need to introduce a \textit{concatenation operator} $\cdot$ that will be used in the axiom. We will use it both for concatenating sequences
and for concatenating sequences to distributions of sequences.
The operator is best described with examples.

\begin{example}
Let $(a_1, a_2, a_3)$, $(b_1, b_2)$, and $(c)$ be sequences of events. Then, an example of concatenating to a sequence is $(c) \cdot (b_1, b_2) = (c, b_1, b_2)$. And an example of concatenating to a distribution over sequences is
\begin{equation}    
c \cdot \Big( \nicefrac{1}{3} (a_1, a_2, a_3) + \nicefrac{2}{3} (b_1, b_2) \Big) = \nicefrac{1}{3} (c, a_1, a_2, a_3) + \nicefrac{2}{3} (c, b_1, b_2).
\end{equation}
\end{example}

The axiom that we are about to introduce says that preferences are not affected by past events. As a result, the agent can forget past events and only focus on optimizing future events. The axiom is a version of memorylessness from~\citep{shakerinava2022utility}.

\begin{axiom}[Memorylessness]\label{ax:memoryless}
For all $e \in \calE$, either
\begin{equation}
    \forall p, q \in \Delta(\calE^*), \quad e \cdot p \pgeq e \cdot q \iff p \pgeq q,
\end{equation}
or
\begin{equation}\label{eq:terminal}
    \forall p, q \in \Delta(\calE^*), \quad e \cdot p \peq e \cdot q.
\end{equation}
\end{axiom}

Events that satisfy \cref{eq:terminal} act as \textit{terminal} events since future events no longer have any effect on the final outcome. We denote by $\Eterm$ the set of such events.

\begin{restatable}[name=Sequential Lexicographic Expected Utility Theorem]{theorem}{SeqLexEUT}
\label{thm:seq_lex_eut}
A relation $(\pgeq, \Delta(\calE^*))$ satisfies \completenessref, \transitivityref, \independenceref, and \memorylessnessref{} if and only if there exist $d \in \N$, a $d$-dimensional linear lexicographic utility function $u$ with $u(\eps) = \mathbf{0}$, rewards $r: \calE \to \R^d$, and reward multipliers $\Gamma: \calE \to \ltp{d} \cup \set{\mathbf{0}}$ such that
\begin{equation}\label{eq:seq_lex}
    u(e \cdot \tau) = r(e) + \Gamma(e) u(\tau),
\end{equation}
for all $e \in \calE$ and $\tau \in \calE^*$.
\end{restatable}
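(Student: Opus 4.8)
The plan is to prove both directions via Hausner's theorem (\cref{thm:hausner}), with the ``only if'' direction carrying the real content. The unifying idea is that, for each event $e$, the map $p \mapsto u(e \cdot p)$ is again a linear lexicographic utility for the \emph{same} relation, so Hausner's uniqueness clause pins down its relationship to $u$.

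For the ``if'' direction, suppose the representation exists. Since $u$ is a linear lexicographic utility for $(\pgeq, \Delta(\calE^*))$, \cref{thm:hausner} immediately yields \completenessref, \transitivityref, and \independenceref, so only \memorylessnessref{} remains. I would first extend the recursion from sequences to lotteries: by linearity of $u$ and $\sum_\tau p(\tau) = 1$ one gets $u(e \cdot p) = r(e) + \Gamma(e) u(p)$ for every $p \in \Delta(\calE^*)$, hence $u(e \cdot p) - u(e \cdot q) = \Gamma(e)\bigl(u(p) - u(q)\bigr)$. If $\Gamma(e) = \mathbf{0}$ then $u(e \cdot p) = u(e \cdot q) = r(e)$, giving the terminal case $e \cdot p \peq e \cdot q$. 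If $\Gamma(e) \in \ltp{d}$, the crux is that lower-triangular matrices with positive diagonal preserve lexicographic order: letting $k$ index the first nonzero entry of a vector $v$, lower-triangularity forces $(\Gamma(e) v)_i = 0$ for $i < k$ and $(\Gamma(e) v)_k = \Gamma(e)_{kk} v_k$ with $\Gamma(e)_{kk} > 0$, so $\Gamma(e) v >_\lex \mathbf{0} \iff v >_\lex \mathbf{0}$. Applying this to $v = u(p) - u(q)$ gives $e \cdot p \pgeq e \cdot q \iff p \pgeq q$, the non-terminal case.

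For the ``only if'' direction, I would start by applying \cref{thm:hausner} to the outcome set $\calO = \calE^*$: \completenessref, \transitivityref, and \independenceref yield a dimension $d$ and a $d$-dimensional linear lexicographic utility $u \colon \Delta(\calE^*) \to \R^d$. Using the translation freedom $u \mapsto u + b$ (which preserves both linearity and the lexicographic representation, since $\sum_\tau p(\tau) = 1$), normalize so that $u(\eps) = \mathbf{0}$. Then, for each $e \in \calE$, study the shifted function $u_e(p) \defeq u(e \cdot p)$. Because concatenation distributes over mixtures, $e \cdot (\alpha p + (1-\alpha) q) = \alpha (e \cdot p) + (1-\alpha)(e \cdot q)$, so linearity of $u$ transfers to $u_e$; thus $u_e$ is a $d$-dimensional linear function on $\Delta(\calE^*)$.

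Now I would split on the \memorylessnessref{} dichotomy. If $e \in \Eterm$, then $u_e$ is constant, and setting $\Gamma(e) = \mathbf{0}$, $r(e) = u(e)$ reproduces $u(e \cdot \tau) = r(e)$. Otherwise $e \cdot p \pgeq e \cdot q \iff p \pgeq q$, which says precisely that $u_e$ is another linear lexicographic utility for the \emph{same} relation, of the same dimension $d$. The uniqueness clause of \cref{thm:hausner} then forces $u_e = \Gamma(e) u + r(e)$ for some $\Gamma(e) \in \ltp{d}$ and $r(e) \in \R^d$; evaluating at $\eps$ and using $u(\eps) = \mathbf{0}$ identifies $r(e) = u(e)$. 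Finally, evaluating $u_e$ at each degenerate lottery $\tau \in \calE^*$ gives $u(e \cdot \tau) = r(e) + \Gamma(e) u(\tau)$. I expect the main obstacle to be the clean invocation of Hausner's uniqueness: one must check that $u_e$ is genuinely a lexicographic utility of matching dimension (linearity from mixture-distributivity, order-representation from \memorylessnessref, codomain $\R^d$ inherited from $u$), so that the uniqueness clause applies verbatim and the multiplier lands in $\ltp{d}$ rather than some larger matrix class.
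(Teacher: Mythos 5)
Your proposal is correct and follows essentially the same route as the paper: both directions go through Hausner's theorem, with the ``only if'' direction obtained by noting that for a non-terminal event $e$ the map $p \mapsto u(e \cdot p)$ is another $d$-dimensional linear lexicographic utility for the same relation, so Hausner's uniqueness clause yields $\Gamma(e) \in \ltp{d}$ and $r(e)$, and the terminal case gives $\Gamma(e) = \mathbf{0}$. Your version is slightly more explicit than the paper's on two supporting facts it leaves implicit --- the linearity of $u_e$ via distributivity of concatenation over mixtures, and the proof that matrices in $\ltp{d}$ preserve $\geq_\lex$ --- but the argument is the same.
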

A proof is provided in \cref{apx:proof_seq_lex_eut}. A point omitted from the theorem's statement, but evident in the proof, is that an event $e$ is terminal if and only if $\Gamma(e) = \mathbf{0}$.

\cref{thm:seq_lex_eut} implies an MDP where rewards $r$ are $d$-dimensional \textit{vectors}, the ``discount'' factors $\Gamma$ are transition-dependent \textit{matrices} in $\ltp{d}$, and expected returns are compared \textit{lexicographically}. We refer to such MDPs as Lexicographic MDPs (LMDPs). Notably, the structure of $\Gamma$ is more general than that of previous works (\eg \cite{skalse2022lexicographic}) where it is typically assumed to be diagonal.

The commonly used \textit{transition-independent} discounted rewards emerge as a result of a non-parameterized version of the temporal $\gamma$-indifference from \citet{bowling2023settling}.

\begin{axiom}[Temporal $\gamma$-Indifference]\label{ax:temp_indiff}
For all $e \in \calE, \tau_1 \in \calE^*, \tau_2 \in \calE^*$,
\begin{equation*}
    \frac{1}{\gamma + 1}(e \cdot \tau_1) + \frac{\gamma}{\gamma + 1}(\tau_2) \peq \frac{1}{\gamma + 1}(e \cdot \tau_2) + \frac{\gamma}{\gamma + 1}(\tau_1),
\end{equation*}
where $\gamma \in [0, 1]$.
\end{axiom}

\begin{restatable}[name=Discounted Lexicographic Expected Utility Theorem]{theorem}{DiscLexEUT}
\label{thm:disc_lex_eut}
A relation $(\pgeq, \Delta(\calE^*))$ satisfies \completenessref, \transitivityref, \independenceref, and \tempindiffref{} if and only if there exist $d \in \N$, a $d$-dimensional linear lexicographic utility function $u$ with $u(\eps) = \mathbf{0}$, and rewards $r: \calE \to \R^d$ such that
\begin{equation}\label{eq:disc_lex}
    u(e \cdot \tau) = r(e) + \gamma u(\tau),
\end{equation}
for all $e \in \calE$ and $\tau \in \calE^*$.
\end{restatable}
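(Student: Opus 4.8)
The plan is to prove both directions by leveraging Hausner's theorem (\cref{thm:hausner}) to dispose of the three common axioms, and to handle \tempindiffref{} by a direct computation. The single fact that makes everything work is that for a lexicographic utility function $u$ we have $p \peq q$ if and only if $u(p) = u(q)$, which, combined with the linearity of $u$, turns the indifference asserted by the axiom into a linear equation in the utility vectors.

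For the \emph{only if} direction, I would first observe that since the relation satisfies \completenessref, \transitivityref, and \independenceref, \cref{thm:hausner} supplies a $d$-dimensional linear lexicographic utility function $u$, unique up to $u \mapsto Au + b$ with $A \in \ltp{d}$ and $b \in \R^d$. Taking $A = I$ and $b = -u(\eps)$ normalizes $u$ so that $u(\eps) = \mathbf 0$; this transformation preserves both linearity (because the mixture weights of any lottery sum to one) and the lexicographic representation (since $I \in \ltp{d}$). The heart of the argument is then to feed \tempindiffref{} into this representation: because the two lotteries in the axiom are indifferent their utilities coincide, and linearity rewrites the axiom as
\[
\tfrac{1}{\gamma+1}u(e\cdot\tau_1) + \tfrac{\gamma}{\gamma+1}u(\tau_2) = \tfrac{1}{\gamma+1}u(e\cdot\tau_2) + \tfrac{\gamma}{\gamma+1}u(\tau_1).
\]
Multiplying by $\gamma+1$ and rearranging yields $u(e\cdot\tau_1) - \gamma u(\tau_1) = u(e\cdot\tau_2) - \gamma u(\tau_2)$, so the quantity $u(e\cdot\tau) - \gamma u(\tau)$ does not depend on $\tau$. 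Defining $r(e)$ to be this common value and evaluating at $\tau = \eps$ (where $u(\eps) = \mathbf 0$ and $e\cdot\eps = e$) identifies $r(e) = u(e)$ and gives the claimed recursion $u(e\cdot\tau) = r(e) + \gamma u(\tau)$.

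For the \emph{if} direction I would run the argument in reverse. Given a linear lexicographic utility function $u$ representing $\pgeq$, the reverse implication of \cref{thm:hausner} immediately delivers \completenessref, \transitivityref, and \independenceref, so only \tempindiffref{} remains. Substituting the recursion $u(e\cdot\tau) = r(e) + \gamma u(\tau)$ and using linearity, a short computation shows that the utilities of both lotteries in the axiom equal $\tfrac{1}{\gamma+1}r(e) + \tfrac{\gamma}{\gamma+1}\bigl(u(\tau_1)+u(\tau_2)\bigr)$; being equal, the lotteries are indifferent, which is exactly what the axiom requires.

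I expect no genuine obstacle here, only bookkeeping: verifying that the affine renormalization setting $u(\eps)=\mathbf 0$ keeps $u$ linear and lexicographic, and checking that the derivation runs uniformly over all $\gamma \in [0,1]$, including the boundary cases $\gamma=0$ (where the recursion collapses to $u(e\cdot\tau)=r(e)$ and every event is effectively terminal) and $\gamma=1$. The one substantive insight is simply that \tempindiffref{} is precisely the statement that $u(e\cdot\tau)-\gamma u(\tau)$ is independent of $\tau$; once this is recognized, everything else is routine.
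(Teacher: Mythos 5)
Your proposal is correct and follows essentially the same route as the paper's proof: invoke Hausner's theorem for the three common axioms, normalize so that $u(\eps)=\mathbf 0$, and convert \tempindiffref{} into the recursion via linearity (the paper simply specializes $\tau_2=\eps$ immediately rather than first noting that $u(e\cdot\tau)-\gamma u(\tau)$ is independent of $\tau$, which is an equivalent computation). The converse direction is likewise the same substitution argument.
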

A proof is provided in \cref{apx:proof_disc_lex_eut}. The proofs of \cref{thm:seq_lex_eut,thm:disc_lex_eut} are similar to their scalar counterparts with comparisons replaced with lexicographic comparisons.

\section{A Single Unsafe Utility}\label{sec:unsafe_utility}

The lexicographic expected utility theorem has a drawback in that it does not specify the dimensionality of the utility function. It also does not specify any particular subspace of $\R^d$ for the utility of outcomes. We would like to have more fine-grained understanding of the structure of the utility function for specific settings of interest.

Therefore, in this section, we propose an intuitive axiom that results in a simple $2$-dimensional linear lexicographic utility function. We do so by introducing an outcome $\xd$ that is `infinitely bad,' as formalized by the next axiom. We interpret this outcome as a critically unsafe outcome. Let $\Ounsafe \defeq \set{o \st o \peq \xd}$ and $\Osafe \defeq \O - \Ounsafe$.

\begin{axiom}[Safety First]\label{ax:safety}
For all $p, q \in \Delta(\Osafe)$ and all $\eps > 0$,
\begin{equation*}
\eps \xd + (1 - \eps) p \pl q.
\end{equation*}    
\end{axiom}\crefalias{Safety First}{ax:safety}
The existence of such an outcome violates \continuityref. To see that, let $p, q \in \safepspace$ be two safe lotteries such that $p \pg q$. We now have $p \pg q \pg \xd$. Then, $\alpha p + (1 - \alpha )\xd \pl q$ for all $\alpha < 1$ and $\alpha p + (1 - \alpha)\xd \pg q$ for $\alpha = 1$. For no $\alpha$ do the two sides become indifferent, so \continuityref is violated. Therefore, from now on, we assume that \continuityref only holds for $(\pg, \safepspace)$. The full set of assumptions are provided below.

\begin{assumption}\label{assumption:1} The relation $(\pgeq, \pspace)$ satisfies \completenessref, \transitivityref, and \independenceref and the relation $(\pgeq, \safepspace)$ satisfies \continuityref. The relation $(\pgeq, \pspace)$ and $\xd$ satisfy \safetyfirstref.  Also, there exist $o_1, o_2 \in \Osafe$ such that $o_1 \pneq o_2$ (non-triviality).
\end{assumption}

In this setting, lotteries can be uniquely written in the form
\begin{equation}\label{eq:xlottery}
[\alpha, p] \defeq (1 - \alpha) \xd + \alpha p,
\end{equation}
where $\alpha \in [0, 1]$ is the probability of safety and $p \in \safepspace$. An exception occurs when $\alpha = 0$ which can be addressed by fixing a lottery $q \in \safepspace$ and writing the (deterministic) lottery $\xd$ as $[0, q]$, making its representation unique.

\begin{example}
The lottery $p = \nicefrac{1}{3}\ \xd + \nicefrac{1}{2}\ x + \nicefrac{1}{6}\ y$ means there is a $\nicefrac{1}{3}$ chance of obtaining outcome $\xd$, a $\nicefrac{1}{2}$ chance of obtaining outcome $x$, and a $\nicefrac{1}{6}$ chance of obtaining outcome $y$.  It is uniquely decomposed into the form of \cref{eq:xlottery} as 
\begin{equation*}
p = \mleft(1 - \frac{2}{3} \mright) \xd + \frac{2}{3} \mleft(\frac{3}{4} x + \frac{1}{4} y \mright) = [\frac{2}{3}, \frac{3}{4} x+\frac{1}{4} y].
\end{equation*}
\end{example}

The next lemma shows that it is possible to compare any two lotteries first by comparing the probability of safety and, if equal, performing a comparison in $\safepspace$.

\begin{restatable}{lemma}{LemmaLex}\label{lemma:lex}
For all $p, q \in \pspace$ and all $\alpha, \beta \in [0, 1]$,
\begin{equation}\label{eq:pref_lex}
[\alpha, p] \pgeq [\beta, q] \iff (\alpha > \beta ) \text{ or } 
\mleft( \alpha =  \beta \text{ and }  p \pgeq q \mright).
\end{equation}	
\end{restatable}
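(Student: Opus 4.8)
The plan is to prove the biconditional by first establishing a clean trichotomy in the safety probability and then assembling it via \completenessref and the definition of~$\pg$. The single algebraic fact driving everything is the identity
\[
[\beta, q] = (1-\alpha)\,\xd + \alpha\,[\beta/\alpha,\, q], \qquad \text{valid whenever } \alpha > \beta \geq 0,
\]
which one verifies by expanding the right-hand side. Its purpose is to expose a factor of $(1-\alpha)\xd$ that is \emph{shared} with $[\alpha, p] = (1-\alpha)\xd + \alpha p$, so that \independenceref can cancel it.

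With this identity I would first handle the strict case $\alpha > \beta$ (so $\alpha > 0$). Applying \independenceref with mixing weight $1-\alpha < 1$ on the common lottery $\xd$ cancels the shared term and reduces the comparison to $p \pgeq [\beta/\alpha, q]$. Now $p \in \safepspace$ is fully safe while $[\beta/\alpha, q]$ places mass $1 - \beta/\alpha > 0$ on $\xd$, so \safetyfirstref gives $[\beta/\alpha, q] \pl p$. Reading the \independenceref equivalence in both directions then promotes this to the strict preference $[\alpha, p] \pg [\beta, q]$. The degenerate subcase $\beta = 0$ is included automatically, since then $[\beta/\alpha, q] = \xd$ and \safetyfirstref with $\eps = 1$ already yields $\xd \pl p$.

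For the tie case $\alpha = \beta$ with $\alpha > 0$, I would apply \independenceref directly (again cancelling $(1-\alpha)\xd$) to obtain $[\alpha, p] \pgeq [\alpha, q] \iff p \pgeq q$; the remaining case $\alpha = \beta = 0$ is trivial because, under the uniqueness convention fixed after \cref{eq:xlottery}, both lotteries are the deterministic outcome $\xd$. I would then assemble the claimed equivalence: the right-to-left direction is immediate from the strict and tie cases, while left-to-right I would argue by contraposition, invoking the strict case with the two lotteries swapped when $\alpha < \beta$, and the tie case together with \completenessref (which forces $q \pg p$ once $p \pgeq q$ fails) when $\alpha = \beta$, so that $[\alpha, p] \pgeq [\beta, q]$ is seen to fail in either situation.

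The main obstacle is purely conceptual and lies in spotting the decomposition above: the trick is to peel off exactly the \emph{common} $\xd$-mass $1-\alpha$, so that \independenceref converts a comparison between two partially-unsafe lotteries into one between a fully safe lottery and a strictly-less-safe one — precisely the regime where \safetyfirstref applies. The only other point requiring care is bookkeeping of strictness: since \independenceref is stated for $\pgeq$, I must read each equivalence in both directions to upgrade $\pgeq$ to $\pg$, which is where \completenessref is used.
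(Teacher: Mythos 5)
Your proof is correct and follows essentially the same three-case trichotomy ($\alpha > \beta$, $\alpha = \beta$, $\alpha < \beta$) as the paper's own proof, resolving each case with Safety First or Independence in the same way. If anything, your explicit decomposition $[\beta, q] = (1-\alpha)\,\xd + \alpha\,[\beta/\alpha, q]$ makes rigorous the step that the paper states only as ``by Safety First, this implies $[\alpha,p] \pg [\beta,q]$,'' since the axiom as written compares a partially unsafe lottery only against a fully safe one and genuinely needs the cancellation via Independence that you supply.
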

A proof is provided in \cref{apx:proof_lex}.

With the appropriate definitions and \cref{lemma:lex} at hand, we are now ready to prove the following theorem.

\begin{restatable}[name=Lexicographic Expected Utility Theorem with a Single Unsafe Utility]{theorem}{LexEUTSingle}
\label{thm:lex_eut_single}
A relation \mbox{$\prel$} satisfies \cref{assumption:1} if and only if there exist a linear utility function $u': \safepspace \to \R$ for $(\pgeq, \safepspace)$ and a 2-dimensional linear lexicographic utility function $u: \Delta(\calO) \to \R^2$ such that for all $o \in \O$,
\begin{equation}\label{eq:lex_eut_single}
    u(o) = \begin{dcases}
       (0, u'(o)) &\; o \in \Osafe,\\
       (-1, 0) &\; o \peq \xd.
     \end{dcases}
\end{equation}
Moreover, $u$ is unique up to transformations of the form $u \mapsto Au + b$, where $A \in \ltp{2}$ and $b \in \R^2$.
\end{restatable}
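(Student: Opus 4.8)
The plan is to build the representation from \cref{thm:vnm} on the safe lotteries, to borrow the uniqueness wholesale from \cref{thm:hausner}, and to use \cref{lemma:lex} to supply the lexicographic structure that glues them together.

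\textbf{Forward direction.} First I restrict to $(\pgeq, \safepspace)$. Because \completenessref, \transitivityref, and \independenceref hold on all of $\pspace$ and $\safepspace$ is closed under mixtures, while \continuityref is assumed on $\safepspace$, all four vNM axioms hold there; \cref{thm:vnm} then yields a linear utility $u' : \safepspace \to \R$, unique up to $u' \mapsto a u' + b$ with $a > 0$, and \cref{assumption:1} (non-triviality) forces $u'$ to be non-constant. I define $u$ on $\O$ by \cref{eq:lex_eut_single} and extend it linearly, $u(p) = \sum_{o} p(o)\, u(o)$. The crux is to check this $u$ is lexicographic. Writing $\alpha = \sum_{o \in \Osafe} p(o)$ and using that every unsafe outcome receives $(-1,0)$, the linear extension evaluates to $u(p) = (\alpha - 1,\ \sum_{o \in \Osafe} p(o)\, u'(o))$; for $\alpha > 0$ the second coordinate equals $\alpha\, u'(p')$ with $p'$ the safe-conditional lottery, so $u([\alpha,p']) = (\alpha - 1, \alpha\, u'(p'))$. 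A lexicographic comparison of $u([\alpha,p'])$ and $u([\beta,q'])$ then reduces exactly to ``$\alpha > \beta$, or $\alpha = \beta$ with $u'(p') \geq u'(q')$,'' which since $u'$ represents $(\pgeq,\safepspace)$ is the right-hand side of \cref{eq:pref_lex}; \cref{lemma:lex} identifies this with $[\alpha,p'] \pgeq [\beta,q']$, so $u(p) \geq_\lex u(q) \iff p \pgeq q$.

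\textbf{Backward direction and uniqueness.} Conversely, given $u$ of the stated form, \completenessref, \transitivityref, and \independenceref follow because the lexicographic order induced by a linear utility is total, transitive, and invariant under adding a common vector and positive scaling; \continuityref on $\safepspace$ holds because there $u(o) = (0, u'(o))$ collapses to the scalar $u'$; and \safetyfirstref follows from $u(\eps\xd + (1-\eps)p) = (-\eps, (1-\eps)u'(p))$ having a strictly negative first coordinate, hence $\pl q$ for every safe $q$. For uniqueness I invoke \cref{thm:hausner}: the relation meets its hypotheses, so any linear lexicographic utility is unique up to $u \mapsto Au + b$ with $A \in \ltp{d}$ for the canonical $d$. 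I pin $d = 2$ by noting that the representation just built is faithful and $2$-dimensional (so $d \leq 2$), while $d = 1$ is impossible since a scalar utility would impose \continuityref on all of $\pspace$, contradicting \safetyfirstref---precisely the argument preceding the theorem. Hence $d = 2$ and uniqueness is up to $\ltp{2}$.

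\textbf{Main obstacle.} I expect the delicate part to be the forward bookkeeping: confirming that assigning $(-1,0)$ to \emph{every} outcome of $\Ounsafe$ (not only $\xd$) is consistent, that the linear extension really reproduces the $[\alpha,p']$ decomposition, and that the boundary case $\alpha = \beta = 0$---where the decomposition is non-unique and all unsafe mixtures collapse to $\xd$ with utility $(-1,0)$---is dispatched without a circular appeal to \cref{lemma:lex}. Everything else follows cleanly from \cref{thm:vnm,thm:hausner,lemma:lex}.
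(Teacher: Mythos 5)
Your proof is correct and, for both directions of the equivalence, follows essentially the paper's own argument: restrict to $(\pgeq,\safepspace)$, apply \cref{thm:vnm} to obtain $u'$, extend linearly to $u([\alpha,p]) = (\alpha-1,\alpha u'(p))$, and let \cref{lemma:lex} do the work of matching the lexicographic order on utility vectors to the preference order. The one place you genuinely diverge is uniqueness: the paper argues directly, anchoring the representation at a best and a worst safe outcome $o^+, o^-$ and at $\xd$, and counting five degrees of freedom that match those of $(A,b)$ with $A \in \ltp{2}$ (positivity of the diagonal coming from $u_2(o^+)>u_2(o^-)$ and $u_1(o^+)>u_1(\xd)$), whereas you import the uniqueness clause of \cref{thm:hausner} after pinning $d=2$ by exhibiting a faithful $2$-dimensional representation and ruling out $d=1$ via the \continuityref violation. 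Both routes are sound; the paper's count is self-contained and makes explicit why the diagonal of $A$ must be positive, while yours is shorter but leans on the (slightly delicate) reading that Hausner's uniqueness statement applies at the minimal $d$. Your ``main obstacle'' paragraph correctly identifies the bookkeeping that both treatments leave implicit: that \independenceref lets every outcome of $\Ounsafe$ be substituted for $\xd$ inside lotteries, so that general lotteries really do reduce to the $[\alpha,p]$ form up to indifference, including the degenerate case $\alpha=0$.
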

A proof is provided in \cref{apx:proof_lex_eut_single}.
\begin{figure}
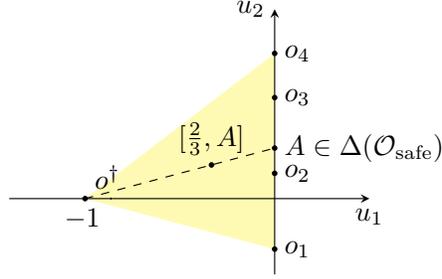

    \centering
    \includestandalone[scale=1.05]{tikz/lexu_plot}
    \caption{An example plotting the 2-dimensional lexicographic utility of $\O = \set{o_1, ..., o_4, \xd}$. Assuming $u$ is linear, its range will be the highlighted triangle. Note that \continuityref is the main axiom responsible for utilities being mapped onto a straight line. Since $\xd$ does not satisfy \continuityref, it gets mapped outside the line containing the set of points $u(\Osafe)$.}
    \label{fig:lexu_plot}
\end{figure}
\begin{example}\label{ex:uplot}
A linear lexicographic utility function with a single unsafe utility is depicted in \cref{fig:lexu_plot}.
\end{example}
In the utility function of \cref{eq:lex_eut_single}, the first dimension of the utility of a lottery represents the probability of safety minus $1$ and the second dimension represents the expected utility given safety. Also note that it is $2$-dimensional, as opposed to \cref{thm:hausner} which does not specify $d$.

\section{Sequential Lexicographic Expected Utility with A Single Unsafe Utility}\label{sec:seq_single_unsafe}

We now extend the single unsafe utility setting to the sequential setting according to the framework introduced in \cref{sec:seq_lex_eut}. We define $\Eunsafe$ as the events that are indifferent to $\xd$. Adding the \memorylessnessref axiom to our previous assumptions leads to the following theorem.

\begin{restatable}[name=Sequential Lexicographic Expected Utility Theorem with a Single Unsafe Utility]{theorem}{SeqLexEUTSingle}
\label{thm:seq_single_unsafe}
A relation $(\pgeq, \Delta(\calE^*))$ satisfies \cref{assumption:1} and \memorylessnessref if and only if there exist rewards $r: \calE \to \R$, reward multipliers $\gamma: \mathcal{E} \to \R_{+}$, and 2-dimensional linear lexicographic utility function $u: \Delta(\calE^*) \to \R^2$ satisfying $u(\eps) = \mathbf{0}$ and
\begin{equation}\label{eq:seq_single_unsafe}
  u(e \cdot \tau) = \begin{dcases}
    (0, r(e))                                &\; e \in \Eterm - \Eunsafe\\
    (-1, 0)                                  &\; e \in \Eunsafe\\
    (u_1(\tau), (1 + u_1(\tau)) r(e) + \gamma(e) u_2(\tau))&\; \text{otherwise}
  \end{dcases}
\end{equation}
for all events $e \in \calE$ and sequences of events $\tau \in \calE^*$.
\end{restatable}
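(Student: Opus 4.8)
The plan is to extract the recursive structure and the two-dimensionality from two results already proved, and then pin down the exact coefficients by exploiting the fact that a deterministic outcome is either safe or unsafe, so the first coordinate of its utility can only be $0$ or $-1$.

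For the ``only if'' direction I would first observe that \cref{assumption:1} together with \memorylessnessref supplies \completenessref, \transitivityref, \independenceref, and \memorylessnessref, so \cref{thm:seq_lex_eut} yields some linear lexicographic utility $\hat u$ (with $\hat u(\eps)=\mathbf 0$) of dimension $d$, rewards $\hat\rho$, and multipliers $\hat\Gamma(e)\in\ltp{d}\cup\set{\mathbf 0}$ with $\hat u(e\cdot\tau)=\hat\rho(e)+\hat\Gamma(e)\hat u(\tau)$. Separately, \cref{thm:lex_eut_single} applies with $\O=\calE^*$ and forces $d=2$, producing the canonical utility $u$ with $u(o)=(0,u'(o))$ on $\Osafe$ and $u(o)=(-1,0)$ on $\Ounsafe$, unique up to $u\mapsto Au+b$ with $A\in\ltp{2}$. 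By the uniqueness in \cref{thm:hausner}, $\hat u$ and $u$ differ by such a transform; since conjugation $\hat\Gamma(e)\mapsto A\hat\Gamma(e)A^{-1}$ preserves lower-triangularity and the (strictly positive) diagonal, the canonical $u$ also satisfies a recursion $u(e\cdot\tau)=\rho(e)+\Gamma(e)u(\tau)$ with $\Gamma(e)\in\ltp{2}\cup\set{\mathbf 0}$, and a further transform in the uniqueness class normalizes $u(\eps)=\mathbf 0$.

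With the canonical $u$ fixed, I would split on whether $e$ is terminal, using that $e$ is terminal iff $\Gamma(e)=\mathbf 0$. For terminal $e$ the value $u(e\cdot\tau)=\rho(e)$ is constant and lands either in $\set0\times\R$ (first branch, $e\in\Eterm-\Eunsafe$) or equals $(-1,0)$ (second branch, $e\in\Eunsafe$); here $\Eunsafe\subseteq\Eterm$ because an event all of whose continuations are indifferent to $\xd$ satisfies the terminal condition \cref{eq:terminal}. For non-terminal $e$ (the ``otherwise'' branch, where necessarily $e\notin\Eunsafe$) write $\Gamma(e)=\left(\begin{smallmatrix}a&0\\ b&c\end{smallmatrix}\right)$ with $a,c>0$ and $\rho(e)=(\rho_1(e),\rho_2(e))$. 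The crux is to show $a=1$, $\rho_1(e)=0$, and $b=\rho_2(e)$, which I would do as follows: since a deterministic $\tau\in\calE^*$ yields a deterministic $e\cdot\tau\in\calE^*$, its first coordinate satisfies $u_1(e\cdot\tau)\in\set{-1,0}$. A safe $\tau$ gives $u_1(e\cdot\tau)=\rho_1(e)$; if this equalled $-1$, every safe continuation would be indifferent to $\xd$ and hence to each other, contradicting non-terminality against the non-triviality witnesses $o_1\pneq o_2\in\Osafe$, so $\rho_1(e)=0$. Taking $\tau=\xd$ gives $u_1(e\cdot\xd)=\rho_1(e)-a=-a\in\set{-1,0}$, and $a>0$ forces $a=1$, so $e\cdot\xd$ is unsafe with $u(e\cdot\xd)=(-1,0)$; reading the second coordinate, $u_2(e\cdot\xd)=\rho_2(e)-b=0$, whence $b=\rho_2(e)$. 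Setting $r(e)\defeq\rho_2(e)$ and $\gamma(e)\defeq c>0$ and invoking linearity of $u$ then reproduces $u(e\cdot\tau)=(u_1(\tau),(1+u_1(\tau))r(e)+\gamma(e)u_2(\tau))$ for all $\tau$.

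For the ``if'' direction, given the piecewise $u$ I would reconstruct $\Gamma(e)=\left(\begin{smallmatrix}1&0\\ r(e)&\gamma(e)\end{smallmatrix}\right)\in\ltp{2}$ with $\rho(e)=(0,r(e))$ on the ``otherwise'' events, and $\Gamma(e)=\mathbf 0$ (with $\rho(e)\in\set0\times\R$ or $\rho(e)=(-1,0)$) on the others, so $u(e\cdot\tau)=\rho(e)+\Gamma(e)u(\tau)$ holds and extends to lotteries by linearity of $u$ and of concatenation. As $u$ is a linear lexicographic utility, \completenessref, \transitivityref, and \independenceref follow from \cref{thm:hausner}; \safetyfirstref and \continuityref on $\safepspace$ follow because the first coordinate is $0$ on safe lotteries and strictly negative once $\xd$ has positive mass (non-triviality being inherited from a non-constant $u'$); and \memorylessnessref holds because $\ltp{2}$ matrices preserve $\geq_\lex$ (giving the non-terminal equivalence) while $\Gamma(e)=\mathbf 0$ gives the terminal indifference. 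I expect the only genuinely delicate point to be the coefficient extraction in the ``otherwise'' branch, namely deriving $a=1$ and $\rho_1(e)=0$ from the discreteness of safety of deterministic outcomes rather than from a continuity or range argument.
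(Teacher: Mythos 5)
Your proposal is correct and follows essentially the same route as the paper: combine \cref{thm:lex_eut_single} (which pins the utility of every deterministic sequence to the set $\set{(-1,0)} \cup (\set{0}\times\R)$) with the affine recursion from \cref{thm:seq_lex_eut}, then extract $\tilde\Gamma_{1,1}(e)=1$, $\tilde r_1(e)=0$, and $\tilde\Gamma_{2,1}(e)=\tilde r_2(e)$ for non-terminal $e$ by requiring the recursion to map that set into itself. Your evaluation at a safe $\tau$ and at $\tau \peq \xd$ is exactly the paper's substitution of $x=(0,0)$ and $x=(-1,0)$, so the only differences are presentational.
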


A proof is provided in \cref{apx:proof_seq_single_unsafe}.

It follows from \cref{thm:seq_single_unsafe} that the events $\Eunsafe$ are terminal and result in an unsafe outcome $\peq \xd$. It is worth noting that \cref{eq:seq_single_unsafe} can be somewhat simplified by assuming that terminal events lead to virtual terminal states. Observe that letting $u(\tau) = (0, 0)$ in the third case of \cref{eq:seq_single_unsafe} produces the first case and letting $u(\tau) = (-1, 0)$ produces the second case. Therefore, we can mainly use the third case and obtain the other two cases by assigning a utility of $(0, 0)$ to safe terminal states and a utility of $(-1, 0)$ to unsafe terminal states. We also learn from \cref{thm:seq_single_unsafe} that each event $e$ has a corresponding reward multiplier $\gamma(e) > 0$. By assuming \tempindiffref on $(\pgeq, \safepspace)$ one can arrive at a fixed transition-independent discount factor $\gamma \in (0, 1]$ for the second dimension.

\begin{example}\label{ex:safeP}
\cref{fig:safe_planning} depicts an example of the described setting. For simplicity, we are assuming that the agent can deterministically move to a neighboring state and that events are a stochastic function of the starting state of any transition.
\end{example}

\cref{eq:seq_single_unsafe} can also be written in the general form of \cref{eq:seq_lex} with $d = 2$. Let us refer to $r$ and $\Gamma$ from \cref{eq:seq_lex} as $\tilde{r}$ and $\tilde{\Gamma}$. Specifying them as follows recovers \cref{eq:seq_single_unsafe}.
\begin{align}
    \tilde{r}(e) &\defeq \begin{dcases}
        (0, r(e)) &\; e \in \calE - \Eunsafe\\
        (-1, 0) &\; \text{otherwise}
    \end{dcases}\label{eq:special_r}\\
    \tilde{\Gamma}(e) &\defeq \begin{dcases}
        \begin{pmatrix}
            1 & 0\\
            r(e) & \gamma(e)
        \end{pmatrix} &\; e \in \calE - \Eterm\\
        \mathbf{0} &\; \text{otherwise}
    \end{dcases}\label{eq:special_gamma}
\end{align}

\section{Properties of Optimal Policies in Lexicographic MDPs}\label{sec:optimal_policy}

In this section, we examine the properties of optimal policies in the lexicographic setting, referred to as LMDPs. We highlight their similarities to the scalar MDP setting and contrast them with the CMDP framework.

The first important point of contrast is that in CMDPs, an optimal stationary policy may depend on the starting state distribution~\citep{altman_constrained_1999}, whereas in MDPs, there exists a stationary policy that is optimal for all starting states~\citep{puterman1994markov}. We refer to this stronger notion of optimality as \dfn{uniform optimality}.

Next, we show that the fundamental theorem of MDPs still holds in the absence of the continuity axiom. To do so we will need to make the assumption that the diagonal entries of $\Gamma(e)$ are less than $1$ for all $e \in \calE$. This is analogous to the assumption that the discount factor is less than $1$ for MDPs. 

\begin{assumption}\label{assumption:discount}
For all $e \in \calE$ and $i \in [d]$, $\Gamma_{i,i}(e) < 1$.
\end{assumption}

\begin{restatable}[name=Fundamental Theorem of LMDPs]{theorem}{FundLMDP}
\label{thm:fund_lmdp}
For every finite LMDP satisfying \cref{assumption:discount}, a policy $\pi: \calS \to \calA$ is uniformly optimal if and only if it is greedy w.r.t. $Q^\star$, that is, $\EE{a \sim \pi(s)}{Q^\star(s, a)} = \lexmax_a Q^\star(s, a)$ for all $s \in \calS$.
\end{restatable}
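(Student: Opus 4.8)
The plan is to mirror the classical proof of the fundamental theorem of MDPs, replacing scalar comparisons by $\geq_\lex$ and scalar discounting by the matrix multipliers $\Gamma(e)$. Writing the expected (linear) utility as a value function, I define for a stationary policy $\pi$ the vector-valued $V^\pi(s)\in\R^d$ and $Q^\pi(s,a)\in\R^d$ through the Bellman equations $V^\pi(s)=\EE{a\sim\pi(s)}{Q^\pi(s,a)}$ and $Q^\pi(s,a)=\EE{(s',e)\sim\bbP(s,a)}{r(e)+\Gamma(e)V^\pi(s')}$, together with the optimality operator $(T^\star V)(s)=\lexmax_a \EE{(s',e)\sim\bbP(s,a)}{r(e)+\Gamma(e)V(s')}$ and its policy-evaluation counterpart $T^\pi$. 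I would first establish that each operator is well-defined with a unique fixed point, then prove a lexicographic policy-improvement lemma, and finally read off the greedy characterization.

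Well-posedness is where the lower-triangular structure of $\ltp{d}$ does the work. Since $\Gamma(e)$ is lower triangular, the first coordinate of the Bellman equation involves only $V_1^\pi$ and discount $\Gamma_{11}(e)<1$ (\cref{assumption:discount}), so $V_1^\pi$ is exactly the value function of an ordinary scalar discounted MDP. Given $V_1^\pi$, the second coordinate becomes a scalar MDP with augmented reward $r_2(e)+\Gamma_{21}(e)V_1^\pi(s')$ and discount $\Gamma_{22}(e)<1$, and so on by induction up to dimension $d$. This cascade yields existence, uniqueness, and coordinatewise convergence of value iteration for $T^\pi$, and the analogous cascade handles $T^\star$. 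Along the way I record two facts used repeatedly: (i) a matrix in $\ltp{d}$ maps lex-nonnegative vectors to lex-nonnegative vectors, and sums and expectations of lex-nonnegative vectors are lex-nonnegative, so $T^\pi$ and $T^\star$ are monotone w.r.t. the statewise-$\geq_\lex$ order on value functions; and (ii) for any $Q$, $\EE{a\sim\pi(s)}{Q(s,a)}\leq_\lex \lexmax_a Q(s,a)$, with equality if and only if $\pi(s)$ is supported on the lexicographic maximizers — proved by the same coordinate cascade, since the first coordinate of the lexmax is the ordinary max of first coordinates.

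For policy improvement, let $\pi'$ be greedy w.r.t. $Q^\pi$. Fact (ii) gives $(T^{\pi'}V^\pi)(s)=\lexmax_a Q^\pi(s,a)\geq_\lex \EE{a\sim\pi(s)}{Q^\pi(s,a)}=V^\pi(s)$, and iterating the monotone operator $T^{\pi'}$ produces a lex-increasing sequence $\{(T^{\pi'})^n V^\pi\}$ converging to $V^{\pi'}$; passing to the limit gives $V^{\pi'}\geq_\lex V^\pi$ at every state. Because there are finitely many deterministic stationary policies and improvement is monotone, policy iteration stabilizes at a $\pi^\star$ that is greedy w.r.t. its own $Q^{\pi^\star}$, hence satisfies the Bellman optimality equation; a standard unrolling with the monotonicity of $T^\star$ then shows $V^{\pi^\star}$ dominates $V^\pi$ for every (even history-dependent) policy, so $V^\star\defeq V^{\pi^\star}$ is the uniformly optimal value and $Q^\star$ is well-defined. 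The claimed equivalence follows: if $\pi$ is greedy w.r.t. $Q^\star$, then $T^\pi V^\star=V^\star$ by fact (ii), so $V^\star$ is the unique fixed point of $T^\pi$ and $V^\pi=V^\star$, i.e. $\pi$ is uniformly optimal; conversely, if $\pi$ is uniformly optimal then $V^\pi=V^\star$ forces $Q^\pi=Q^\star$, whence $\EE{a\sim\pi(s)}{Q^\star(s,a)}=V^\pi(s)=V^\star(s)=\lexmax_a Q^\star(s,a)$, i.e. $\pi$ is greedy.

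The main obstacle is convergence of the Bellman iterations. A naive sup-norm contraction argument is unavailable because the discount is a transition-dependent matrix and because $\geq_\lex$ is not closed under limits (e.g. $(1/n,-1)\to(0,-1)$ crosses from lex-nonnegative to lex-negative). I resolve both issues with the triangular cascade, which reduces each coordinate to a genuine $\gamma<1$ scalar contraction, and with the elementary lemma that a \emph{lex-increasing} convergent sequence does respect $\leq_\lex$ in the limit: its first coordinates are nondecreasing and bounded by the limit, and the case of equality there recurses to the lower coordinates. Verifying that these monotonicity and limit lemmas interact correctly with the expectation over $\bbP$ is the one place demanding care; everything else parallels the scalar theory.
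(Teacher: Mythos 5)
Your proof is sound, but it takes a genuinely different route from the paper's. The paper does not rebuild any Bellman machinery: it invokes the scalar Fundamental Theorem of MDPs (\cref{thm:fund_mdp}) as a black box, once per coordinate. Concretely, it treats $Q^\pi_1$ as an ordinary discounted scalar MDP, restricts the action set at each state to $\argmax_a Q^\star_1(s,a)$, observes that on this restricted MDP the cross-term $\Gamma_{2,1}(e)u_1(\tau)$ has the same expectation $\E{V_1^\star(\mathbf{s'}) \st s, a}$ under every remaining policy and can therefore be folded into the scalar reward $r_2(e) + \Gamma_{2,1}(e)\E{V_1^\star(\mathbf{s'}) \st s, a}$, applies the scalar theorem again for $Q^\pi_2$, and iterates through all $d$ coordinates; the surviving greedy policies have value $(Q^\star_1, \dots, Q^\star_d)$, which lexicographically dominates every $Q^\pi$. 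You instead redevelop the whole apparatus from first principles --- lex-monotone operators $T^\pi$ and $T^\star$, policy improvement, policy iteration, and an explicit monotone-limit lemma to get around the fact that $\geq_\lex$ is not closed under limits --- and use the triangular cascade only to certify well-posedness and convergence of the evaluation operators. Both arguments hinge on the same structural facts (lower-triangular $\Gamma$, diagonal entries below $1$ via \cref{assumption:discount}, and the pinning-down of lower-priority values once higher-priority ones are optimal); the paper's reduction is considerably shorter, whereas yours is self-contained, delivers the Bellman optimality equation and a policy-improvement guarantee as reusable byproducts, and makes explicit two details the paper's proof leaves implicit: why lex-monotone limits behave correctly despite the discontinuity of the lexicographic order, and why the cross-terms can be handled by monotonicity rather than by the fold-into-reward trick.
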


A proof is provided in \cref{apx:proof_fund_lmdp}.

\begin{corollary} For every finite LMDP satisfying \cref{assumption:discount}, there exists a stationary deterministic uniformly optimal policy. \end{corollary}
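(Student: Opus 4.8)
The plan is to derive the corollary directly from the Fundamental Theorem of LMDPs (\cref{thm:fund_lmdp}) by exhibiting an explicit greedy policy. That theorem already characterizes uniform optimality as greediness with respect to $Q^\star$, so the only work remaining is to produce a \emph{stationary deterministic} policy that is greedy; the optimal action-value function $Q^\star$ may be taken as given from the proof of that theorem.

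First I would use finiteness of the LMDP: since $\calA$ is finite, for each state $s \in \calS$ the set $\set{Q^\star(s, a) \st a \in \calA} \subset \R^d$ is finite, and a finite subset of $\R^d$ always has a lexicographic maximum, attained by at least one action. I would then define $\pi^\star$ by selecting, for every $s$, some action attaining $\lexmax_{a} Q^\star(s, a)$ (breaking ties arbitrarily). By construction $\pi^\star: \calS \to \calA$ is stationary and deterministic.

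Next I would verify that $\pi^\star$ is greedy in the sense of \cref{thm:fund_lmdp}. Because $\pi^\star$ is deterministic, the vector expectation collapses to a single value, $\EE{a \sim \pi^\star(s)}{Q^\star(s, a)} = Q^\star(s, \pi^\star(s))$, which equals $\lexmax_{a} Q^\star(s, a)$ by the choice of $\pi^\star(s)$, for every $s \in \calS$. Hence $\pi^\star$ satisfies the greediness condition, and \cref{thm:fund_lmdp} immediately yields that $\pi^\star$ is uniformly optimal.

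I expect no substantive obstacle, since all the difficulty is absorbed into \cref{thm:fund_lmdp}; the corollary is essentially a finiteness argument layered on top of that result. The only points warranting care are that the lexicographic maximum over a finite set is genuinely attained (immediate, since the order of \cref{def:ord_lex} is total on $\R^d$) and that the greediness condition, stated for possibly stochastic policies via a vector expectation, correctly reduces to the pointwise lexicographic-argmax condition once the policy is taken to be deterministic.
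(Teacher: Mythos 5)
Your proposal is correct and takes essentially the same route as the paper, which simply observes that any $\pi$ with $\pi(s) \in \argmax_a Q^\star(s, a)$ is stationary, deterministic, and greedy, hence uniformly optimal by \cref{thm:fund_lmdp}. The extra care you take (attainment of the lexicographic maximum over a finite action set, and the collapse of the expectation for a deterministic policy) is just a more explicit spelling-out of the same one-line argument.
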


\begin{proof}
Any policy $\pi$ such that $\pi(s) \in \argmax_a Q^\star(s, a)$ is a stationary deterministic uniformly optimal policy.
\end{proof}

The result above mirrors the MDP setting but stands in stark contrast to CMDPs. In a CMDP, an optimal policy might need to randomize its actions, and, as mentioned before, a uniformly optimal stationary policy might not exist~\citep{altman_constrained_1999,csaba}. The fundamental reason for these differences is that CMDPs violate both \independenceref and \continuityref~\citep[\S 7]{bowling2023settling}, whereas LMDPs violate only \continuityref. In fact, it has been shown that a slightly weaker notion of \independenceref is sufficient for guaranteeing the existence of a uniformly optimal stationary policy in trees~\citep{colacocarr2024conditions}. We conclude that the fundamental properties of optimal policies in MDPs do not rely on the \continuityref axiom.

\section{Limitations and Future Work}\label{sec:limitations_and_future_work}

Lexicographic objectives are only appropriate in settings where objectives cannot be traded off. Such cases may be less common in practice. Also, we have not proposed new algorithmic methods for solving lexicographic MDPs or RL problems. Although prior works have developed algorithms in this space~\citep{wray2015multi,skalse2022lexicographic}, lexicographic optimization remains challenging. Furthermore, our analysis focuses on environment-independent reward specification, following the expected utility theory paradigm. In practice, when the environment is known and fixed, it may be possible to design simpler, scalar rewards that suffice for a specific task.

Lexicographic objectives have potential applications in AI safety, particularly in problems of AI control. For example, a primary objective might be to maintain a safety guardrail~\citep{bengio2025superintelligent}, ensuring that an AI system remains confined to a sandbox environment or that its influence is restricted in a controlled manner. Exploring such applications is a promising avenue for future research.

\section{Conclusion}\label{sec:conclusion}

We presented a lexicographic generalization of expected utility theory for sequential decision-making, motivated by settings where objectives must be prioritized in a strict, non-compensatory order. Building on Hausner's extension of expected utility theory, we identified a simple and practical condition under which preferences cannot be captured by scalar rewards, necessitating lexicographically ordered utility vectors.

We provided a full characterization of such utility functions in Markov Decision Processes (MDPs) under a memorylessness assumption on preferences, including both the $2$-dimensional case and the general $d$-dimensional case. Importantly, we showed that optimal policies in this setting retain key properties of scalar-reward MDPs, such as the existence of stationary, uniformly optimal policies, in contrast to the Constrained MDP (CMDP) framework.

Our results generalize the scalar reward hypothesis while preserving the utility-maximization paradigm, offering a principled foundation for lexicographic objectives in sequential decision-making.

\section*{Acknowledgments}

We thank Mehrab Hamidi, Motahareh Sohrabi, Sékou-Oumar Kaba, Vedant Shah, Damiano Fornasiere, Gauthier Gidel, Yoshia Bengio, and the SAIFH team at Mila for feedback on earlier drafts of this paper. This work was in part supported by Canada CIFAR AI Chairs and NSERC Discovery programs.

\bibliography{main}
\bibliographystyle{icml2025}

\newpage
\appendix

\section{Background: Reinforcement Learning}\label{apx:rl}

Reinforcement Learning (RL) studies the setting where an agent interacts with an environment while receiving a reward signal in response~\citep{sutton2018reinforcement}. The environment is typically modeled as a Markov Decision Process (MDP). An MDP can be described by a set of states $\calS$, set of actions $\calA$, transition and reward probabilities $\P{s_{t + 1}, r_{t+1} \st s_t, a_t}$, and a discount factor $\gamma \in [0, 1)$. The objective is to produce a policy $\pi: \calS \to \calA$ that maximizes expected cumulative discounted reward, that is, $\EE{\pi}{\sum_{t=0}^\infty \gamma^t \bfr_t}$. The cumulative discounted reward will sometimes be referred to as \dfn{return}.

The expected return of starting from state $s$ and following policy $\pi$ is known as the \dfn{value function} and is defined as
\begin{equation}
    V^\pi(s) \defeq \EE{\pi}{\sum_{t=0}^\infty \gamma^t \bfr_t \st s_0 = s}.
\end{equation}

The expected return of starting from state $s$, taking action $a$, and thereafter following policy $\pi$ is known as the \dfn{$Q$-value function} and is defined as
\begin{equation}
    Q^\pi(s, a) \defeq \EE{\pi}{ \sum_{t=0}^\infty \gamma^t \bfr_t \st s_0 = s, a_0 = a }.
\end{equation}

The \dfn{optimal $Q$-value function} is defined as
\begin{equation}
    Q^\star(s, a) \defeq \max_{\pi} Q^\pi(s, a).
\end{equation}

A policy $\pi$ is \dfn{uniformly optimal} if it satisfies $Q^\pi = Q^\star$. An important theorem in the theory of MDPs, sometimes known as the fundamental theorem of MDPs, characterizes all uniformly optimal policies of an MDP.
\begin{theorem}[Fundamental Theorem of MDPs]\label{thm:fund_mdp}
For every finite MDP, a policy $\pi: \calS \to \calA$ is uniformly optimal if and only if it is greedy w.r.t. $Q^\star$, that is, $\EE{\mathbf{a} \sim \pi(s)}{Q^\star(s, \mathbf{a})} = \max_a Q^\star(s, a)$ for all $s \in \calS$.
\end{theorem}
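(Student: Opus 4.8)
The plan is to reduce the statement to the standard fixed-point theory of the Bellman operators, working with value functions rather than $Q$-functions, and reading ``uniformly optimal'' as optimality at every starting state, i.e.\ $V^\pi = V^\star$ where $V^\star(s) \defeq \max_a Q^\star(s,a)$ (the notion introduced in \cref{sec:optimal_policy}). First I would introduce, on the finite-dimensional space $\R^{\calS}$, the policy-evaluation operator $(\mathcal{T}^\pi V)(s) \defeq \EE{a \sim \pi(s)}{\E{\bfr + \gamma V(s') \st s, a}}$ and the Bellman optimality operator $(\mathcal{T} V)(s) \defeq \max_a \E{\bfr + \gamma V(s') \st s, a}$. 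Both are $\gamma$-contractions in $\norm{\cdot}_\infty$, since the maximum and the expectation are non-expansive and $\sum_{s'} \P{s' \st s,a} = 1$; because $\gamma < 1$, the Banach fixed-point theorem gives each a unique fixed point. The standard identifications --- that the fixed point of $\mathcal{T}^\pi$ is $V^\pi$, and that the fixed point of $\mathcal{T}$ equals $V^\star = \sup_{\pi'} V^{\pi'}$, with $Q^\star(s,a) = \E{\bfr + \gamma V^\star(s') \st s,a}$ --- provide the foundation for both directions.

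For the forward direction, suppose $\pi$ is uniformly optimal, so $V^\pi = V^\star$ and hence $Q^\pi(s,a) = \E{\bfr + \gamma V^\pi(s') \st s,a} = \E{\bfr + \gamma V^\star(s') \st s,a} = Q^\star(s,a)$ for every $(s,a)$. Using the defining relation $V^\pi(s) = \EE{a \sim \pi(s)}{Q^\pi(s,a)}$, I would compute $\EE{a\sim\pi(s)}{Q^\star(s,a)} = \EE{a\sim\pi(s)}{Q^\pi(s,a)} = V^\pi(s) = V^\star(s) = \max_a Q^\star(s,a)$, which is exactly the greedy condition at $s$; as $s$ was arbitrary, $\pi$ is greedy w.r.t.\ $Q^\star$.

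For the reverse direction, suppose $\pi$ is greedy, i.e.\ $\EE{a\sim\pi(s)}{Q^\star(s,a)} = \max_a Q^\star(s,a) = V^\star(s)$ for all $s$. Then I would verify that $V^\star$ is a fixed point of $\mathcal{T}^\pi$: $(\mathcal{T}^\pi V^\star)(s) = \EE{a\sim\pi(s)}{\E{\bfr + \gamma V^\star(s') \st s,a}} = \EE{a\sim\pi(s)}{Q^\star(s,a)} = V^\star(s)$. Since $\mathcal{T}^\pi$ has the unique fixed point $V^\pi$, this forces $V^\pi = V^\star$, so $\pi$ is uniformly optimal.

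I expect the main obstacle to be the foundational step rather than the two short equivalences: one must establish carefully that the $\gamma$-contraction fixed points coincide with $V^\pi$ and $V^\star$ --- in particular that $\max_a Q^\star(s,a)$, defined pointwise, really is the optimal value attainable by some policy at every state --- and one must be attentive to the fact that only $V^\pi = V^\star$, not the a priori weaker $Q^\pi = Q^\star$, literally expresses optimality at every starting state. These facts are standard for finite MDPs with $\gamma < 1$ and can be cited from \citet{puterman1994markov}; the entire MDP-specific content of the theorem is then carried by the two one-line computations above.
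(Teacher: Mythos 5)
Your proposal is correct and takes essentially the same route as the paper's proof of \cref{thm:fund_mdp}: the paper supplies no argument of its own, deferring to \citet{Szepesvari2023Lec2}, whose proof is precisely this Banach fixed-point/Bellman-operator argument, with the two short greedy computations you give for the forward and reverse directions. Your closing caveat is also exactly right and worth keeping explicit: uniform optimality must be read as $V^\pi = V^\star$ (the ``optimal for all starting states'' notion of \cref{sec:optimal_policy}), since the appendix's literal condition $Q^\pi = Q^\star$ is strictly weaker --- a non-greedy action at a state that no transition reaches with positive probability leaves $Q^\pi = Q^\star$ intact while $V^\pi(s) < V^\star(s)$ there --- and under that weaker reading the ``only if'' direction of the theorem would fail.
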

See \citet{Szepesvari2023Lec2} for a proof.

\section{Comparison of Constraints, Penalties, and Lexicographic Optimization}\label{sec:lex_constraint_penalty}

\begin{figure}[H]
    \centering
    \includestandalone[width=0.3\linewidth]{tikz/cpl_plot}\hspace{2em}
    \includegraphics[width=0.25\linewidth]{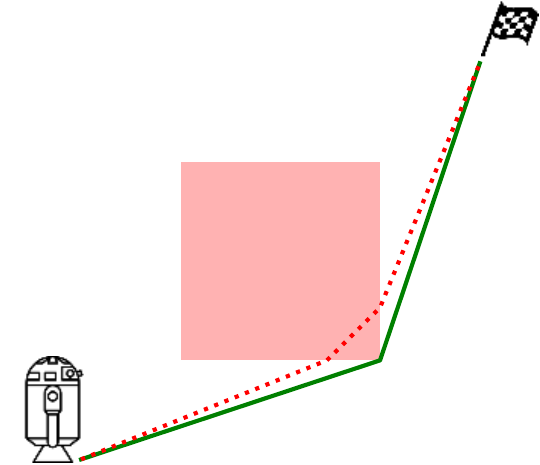}
    \caption{\textbf{(Left)} A set of policies $x$ (light cyan region), plotted by their $R, C$ values. The solutions of the constrained \eqref{c}, penalty \eqref{p}, and lexicographic \eqref{lex} methods are indicated. The lowest-risk (safest) policy is given by \eqref{lex}. \textbf{(Right)} The agent aims to reach the target. The cost $C$ is the path length, and the risk $R$ is the portion of the path that lies within the unsafe (red) region.}
    \label{fig:cpl}
\end{figure}

In this section, we compare lexicographic optimization to the constrained approach and the penalty approach. We introduce a notation restricted to this section: $R$ denotes risk (the negation of the probability of safety), and $C$ denotes cost (the negation of the expected utility). The optimization problems corresponding to the constrained \eqref{c}, penalty \eqref{p}, and lexicographic \eqref{lex} approaches are as follows:
\begin{align}
    \label{c}\tag{C} &\min_x C(x), &\text{ subject to } R(x)\leq \delta\\
    \label{p}\tag{P} &\min_x C(x) + \lambda R(x) &\\
    \label{lex}\tag{L} &\lexmin_x \ (R(x), C(x)) &
\end{align}

The left part of \cref{fig:cpl} illustrates the optima of the different objectives over a two-dimensional set. The lexicographic minimizer yields the lowest-risk solution.

The right part of \cref{fig:cpl} illustrates a path optimization problem where $R$ and $C$ are defined as functions of the path $x$. The lexicographic minimizer is given by \eqref{lex} and corresponds to the green path. The penalty method \eqref{p} leads to the dotted red path, which is shorter but has a nonzero $R(x)$. Either path can correspond to the constrained approach \eqref{c}, depending on the value of $\delta$. Using hard constraints with a nonzero safety threshold or the penalty method results in a path that spends a nonzero amount of time in the unsafe region to reach the goal faster, resulting in the dotted red path. Using either hard constraints with a safety threshold of zero or lexicographic optimization results in the green path, which is optimal.

\section{Proofs}\label{apx:proofs}

\subsection{Proof of \cref{thm:seq_lex_eut}}\label{apx:proof_seq_lex_eut}

\vspace{10pt}
\SeqLexEUT*

\begin{proof}
($\Rightarrow$) By the lexicographic expected utility theorem (\cref{thm:hausner}) there exist $d \in N$ and a $d$-dimensional linear lexicographic utility function $u$. Among the possible utility functions we pick one such that $u(\eps) = \mathbf{0}$. Now, by \memorylessnessref preferences are either retained when an event $e$ is prepended to lotteries, or all lotteries becomes indifferent. We handle the two cases below.

\textbf{1.} If $e$ retains preferences, then the relation $(\pgeq, \Delta(\calE^*))$ is isomorphic to $(\pgeq, e \cdot \Delta(\calE^*))$, implying that their corresponding utility functions must be related by the uniqueness condition of \cref{thm:hausner}. That is, for all $\tau \in \calE^*$, $u(e \cdot \tau) = Au(\tau) + b$, where $A \in \ltp{d}$ and $b \in \R^d$. Let's name the corresponding $A$ and $b$ of $e$ as $\Gamma(e)$ and $r(e)$ respectively. 

\textbf{2.} If $e$ makes future lotteries indifferent, we must have for all $\tau \in \calE^*$, $u(e \cdot \tau) = b$, where $b \in \R^d$. Again, we name the corresponding $b$ of $e$ as $r(e)$ and, to match the format of the previous case, we let $\Gamma(e)$ be the zero matrix $\mathbf{0}$.

In summary, for all $e \in \calE$ and $\tau \in \calE^*$, utilities are of the form
\begin{equation}\label{eq:gen_lexu}
    u(e \cdot \tau) = r(e) + \Gamma(e) u(\tau), 
\end{equation}
where $r: \calE \to \R^d$ and $\Gamma: \calE \to \ltp{d} \cup \set{\mathbf{0}}$.

($\Leftarrow$) By the lexicographic expected utility theorem (\cref{thm:hausner}) the relation $(\pgeq, \Delta(\calE^*))$ corresponding to $u$ satisfies \completenessref, \transitivityref, and \independenceref. It remains to show that $\pgeq$ satisfies \memorylessnessref.

Let $e \in \calE$. We consider two cases:

\textbf{1.} If $\Gamma(e) = \mathbf{0}$, then by \cref{eq:seq_lex}, $u(e \cdot \tau) = r(e)$ for all $\tau \in \calE^*$, so $u(e \cdot p)$ is constant across all lotteries $p \in \Delta(\calE^*)$, implying $e \cdot p \peq e \cdot q$ for all lotteries $p, q$.

\textbf{2.} If $\Gamma(e) \in \ltp{d}$, then for all lotteries $p, q \in \Delta(\calE^*)$,
\begin{align*}
     & e \cdot p \pgeq e \cdot q \\
\iff & u(e \cdot p) \geq_\mathrm{lex} u(e \cdot q) & \text{($u$ is a lexicographic utility function)}\\
\iff & r(e) + \Gamma(e) u(p) \geq_\mathrm{lex} r(e) + \Gamma(e) u(q) & \text{(\cref{eq:seq_lex})} \\
\iff & u(p) \geq_\mathrm{lex} u(q) & \text{($u \mapsto \Gamma(e)u + r(e)$ preserves $\geq_\mathrm{lex}$)}\\
\iff & p \pgeq q. & \text{($u$ is a lexicographic utility function)}\\
\end{align*}
In both cases, the preference relation satisfies \memorylessnessref.
\end{proof}

\subsection{Proof of \cref{thm:disc_lex_eut}}\label{apx:proof_disc_lex_eut}

\vspace{10pt}
\DiscLexEUT*

\begin{proof}
($\Rightarrow$) By the lexicographic expected utility theorem (\cref{thm:hausner}) there exist $d \in N$ and a $d$-dimensional linear lexicographic utility function $u$. Among the possible utility functions we pick one such that $u(\eps) = \mathbf{0}$. Now, letting $\tau_1 = \tau$ and $\tau_2 = \eps$ in \tempindiffref we get that for all $e \in \calE$ and $\tau \in \calE^*$,
\begin{align*}
     & \frac{1}{\gamma + 1}(e \cdot \tau) + \frac{\gamma}{\gamma + 1}(\eps) \peq \frac{1}{\gamma + 1}(e) + \frac{\gamma}{\gamma + 1}(\tau)\\
\iff & u \mleft( \frac{1}{\gamma + 1}(e \cdot \tau) + \frac{\gamma}{\gamma + 1}(\eps) \mright) = u \mleft( \frac{1}{\gamma + 1}(e) + \frac{\gamma}{\gamma + 1}(\tau) \mright) & \text{($u$ is a utility function)}\\
\iff & \frac{1}{\gamma + 1}u(e \cdot \tau) + \frac{\gamma}{\gamma + 1}u(\eps) = \frac{1}{\gamma + 1}u(e) + \frac{\gamma}{\gamma + 1}u(\tau) & \text{($u$ is linear)}\\
\iff & u(e \cdot \tau) = u(e) + \gamma u(\tau) & (u(\eps) = \mathbf{0}).
\end{align*}

If we let $r(e) \defeq u(e)$, then we have our result.

($\Leftarrow$) By the lexicographic expected utility theorem (\cref{thm:hausner}) the relation $(\pgeq, \Delta(\calE^*))$ corresponding to $u$ satisfies \completenessref, \transitivityref, and \independenceref. It remains to show that $\pgeq$ satisfies \tempindiffref.

For all $e \in \calE, \tau_1 \in \calE^*, \tau_2 \in \calE^*$ we have
\begin{align*}
     & \frac{1}{\gamma\!+\!1}u(e) + \frac{\gamma}{\gamma\!+\!1}(u(\tau_1) + u(\tau_2)) = \frac{1}{\gamma\!+\!1}u(e) + \frac{\gamma}{\gamma\!+\!1}(u(\tau_1) + u(\tau_2)) \\
\iff & \frac{1}{\gamma\!+\!1} \mleft( u(e) + \gamma u(\tau_1) \mright) + \frac{\gamma}{\gamma\!+\!1} u(\tau_1) = \frac{1}{\gamma\!+\!1}\mleft(u(e) + \gamma u(\tau_2) \mright) + \frac{\gamma}{\gamma\!+\!1} u(\tau_1) \\
\iff & \frac{1}{\gamma\!+\!1}u(e \cdot \tau_1) + \frac{\gamma}{\gamma\!+\!1} u(\tau_1) = \frac{1}{\gamma\!+\!1} u(e \cdot \tau_2) + \frac{\gamma}{\gamma\!+\!1} u(\tau_1) & \text{(\cref{eq:disc_lex})}\\
\iff & u\mleft(\frac{1}{\gamma\!+\!1}(e \cdot \tau_1) + \frac{\gamma}{\gamma\!+\!1} (\tau_1)\mright) = u\mleft(\frac{1}{\gamma\!+\!1} (e \cdot \tau_2) + \frac{\gamma}{\gamma\!+\!1} (\tau_1)\mright) & \text{($u$ is linear)}\\
\iff & \frac{1}{\gamma\!+\!1}(e \cdot \tau_1) + \frac{\gamma}{\gamma\!+\!1} (\tau_1) \peq \frac{1}{\gamma\!+\!1} (e \cdot \tau_2) + \frac{\gamma}{\gamma\!+\!1} (\tau_1) & \text{($u$ is a utility)}\\
\end{align*}

\end{proof}

\subsection{Proof of \cref{lemma:lex}}\label{apx:proof_lex}

\LemmaLex*

\begin{proof}
We separate the space of possibilities into three cases:
\begin{enumerate}
    \item If $\alpha > \beta$, then $[\alpha, p]$ has greater probability of avoiding $o^\dagger$. By \safetyfirstref, this implies $[\alpha, p] \pg [\beta, q]$, hence $[\alpha, p] \pgeq [\beta, q] \iff \text{True}$.
    \item If $\alpha = \beta$, then by \independenceref, $[\alpha, p] \pgeq [\alpha, q] \iff p \pgeq q$.
    \item If $\alpha < \beta$, then $[\alpha, p]$ has less probability of avoiding $o^\dagger$. By \safetyfirstref, this implies $[\alpha, p] \pl [\beta, q]$, hence $[\alpha, p] \pgeq [\beta, q] \iff \text{False}$.
\end{enumerate}

Putting it all together, we get:
\[
[\alpha, p] \pgeq [\beta, q] \iff
\mleft( \alpha > \beta \text{ and } \text{True} \mright)
\text{ or } 
\mleft( \alpha = \beta \text{ and } p \pgeq q \mright)
\text{ or }
\mleft( \alpha < \beta \text{ and } \text{False} \mright).
\]

This proves the desired equivalence.\end{proof}

\subsection{Proof of \cref{thm:lex_eut_single}}\label{apx:proof_lex_eut_single}

\vspace{10pt}
\LexEUTSingle*

\begin{proof}
    Given that $u'$ is linear, we uniquely extend \cref{eq:lex_eut_single} into a \emph{linear} utility function on the entire domain. For all $p \in \safepspace$ and $\alpha \in [0, 1]$,
    \begin{equation}\label{eq:lexu}
    u([\alpha, p]) = (\alpha - 1, \alpha u'(p)).  
    \end{equation}

    $(\Rightarrow)$ The relation $(\pg, \safepspace)$ satisfies the vNM axioms so the vNM theorem implies the existence of a linear utility function $u'$. Showing that the function $u$ of \cref{eq:lexu} is a linear lexicographic utility function is a simple application of \cref{lemma:lex}.
    
    $(\Leftarrow)$ It is straightforward to check that preferences induced by the given $u$ satisfy axioms in the way described in the theorem.

    Uniqueness: Let $o^+ \pg o^-$ be a best and worst outcome in $\safepspace$ respectively. Specifying the utility of $o^-$, $o^+$, and $\xd$ uniquely specifies the entire utility function. There are $5$ degrees of freedom because we must have $u_1(o^+) = u_1(o^-)$. These $5$ degrees of freedom correspond to the $5$ degrees of freedom in $A$ and $b$. Positivity of the diagonal of $A$ stems from the requirements that $u_2(o^+) > u_2(o^-)$ and $u_1(o^+) > u_1(\xd)$. It is not hard to verify that $Au + b$ is a also a linear lexicographic utility function for $\prel$.
\end{proof}

\subsection{Proof of \cref{thm:seq_single_unsafe}}\label{apx:proof_seq_single_unsafe}

\vspace{10pt}
\SeqLexEUTSingle*

\begin{proof}
    By \cref{thm:lex_eut_single} there exists a $2$-dimensional linear utility function $u: \calE^* \to \R^2$ where the utility of any event that is indifferent to $\xd$ is $(-1, 0)$ and the utility of every other event is in $\set{0} \times \R$. We let $\calU \defeq \set{(-1, 0)} \cup (\set{0} \times \R)$ be this set of possible utilities.
    
    By \cref{thm:seq_lex_eut}, there exist rewards $\tilde{r}: \calE \to \R^2$ and reward multipliers $\tilde{\Gamma}: \calE \to \ltp{2} \cup \set{\mathbf{0}}$ such that $u(\eps) = \mathbf{0}$ and, for all $e \in \calE, \tau \in \calE^*$, utilities are of the form
    \begin{equation}
        u(e \cdot \tau) = \tilde{r}(e) + \tilde{\Gamma}(e) u(\tau).
    \end{equation}
    Additionally, for all $x \in \calU, e \in \calE$, we must have
    \begin{equation}\label{eq:single_constraint}
    \tilde{r}(e) + \tilde{\Gamma}(e)x \in \calU.
    \end{equation}
    \cref{eq:single_constraint} puts a constraint on $\tilde{r}$ and $\tilde{\Gamma}$. Letting $x = (0, 0)$ we conclude that $\tilde{r}(e) \in \calU$. If $\tilde{\Gamma}(e) \neq \mathbf{0}$ ($e$ is non-terminal), then letting $x = (-1, 0)$ we get $\tilde{r}_1(e) - \tilde{\Gamma}_{1, 1}(e) \in \set{-1, 0}$. Since, $\tilde{\Gamma}_{1, 1}(e) > 0$ we conclude that $\tilde{r}_1(e) - \tilde{\Gamma}_{1, 1}(e) = -1$, $\tilde{r}_1(e) = 0$, and $\tilde{\Gamma}_{1, 1}(e) = 1$. Consequently, we must have that $\tilde{r}_2(e) - \tilde{\Gamma}_{2, 1}(e) = 0$, or equivalently, $\tilde{\Gamma}_{2, 1}(e) = \tilde{r}_2(e)$ for non-terminal events.

    Finally, letting $\gamma(e) \defeq \tilde{\Gamma}_{2, 2}(e)$ and $r(e) \defeq \tilde{r}_1(e)$ we get the desired result.
\end{proof}

\subsection{Proof of \cref{thm:fund_lmdp}}\label{apx:proof_fund_lmdp}

\vspace{10pt}
\FundLMDP*

\begin{proof}
The proof relies on the fundamental theorem of MDPs (\cref{thm:fund_mdp}). Let $Q^\pi_1$ be the first dimension of the Q-value function under policy $\pi$ and let $Q^\star_1(s, a) \defeq \max_\pi Q^\pi_1(s, a)$ for all $s, a$. This maximum exists since $Q^\pi_1$ is bounded (due to $\Gamma_{1,1}(e) < 1$) and the space of policies of a finite MDP is compact. The fundamental theorem of MDPs says that a policy $\pi^\star$ is uniformly optimal if and only if it is greedy w.r.t. $Q^\star_1$, that is, $\pi^\star(s) \in \Delta(\argmax_a Q^\star_1(s, a))$. These policies are essentially restricted to choosing an action from a restricted set of actions at each state given by $\argmax_a Q^\star_1(s, a)$. We can therefore imagine a smaller MDP with this restricted action set at each state. All policies of this MDP are optimal w.r.t. the first dimension of utility.

By \cref{thm:seq_lex_eut}, the second dimension of utility satisfies $u_2(e \cdot \tau) = r_2(e) + \Gamma_{2, 1}(e)u_1(\tau) + \Gamma_{2, 2}(e)u_2(\tau)$. Since all policies in this MDP are optimal w.r.t. the first dimension of utility, $\EE{\pi}{u_1(\tau) \st s, a}$ is fixed and equal to $\E{V_1^\star(\mathbf{s'}) \st s, a}$ for all policies $\pi$ of the second MDP. This fixed value can be placed into the reward function without affecting $Q_2^\pi$. As a result, the second MDP can be viewed as a scalar MDP with the following reward: $r_2(e) + \Gamma_{2, 1}(e)\E{V_1^\star(\mathbf{s'}) \st s, a}$.

Since the space of policies of this smaller MDP is a compact subset of the original space of policies and $\Gamma_{2,2}(e) < 1$, we can again invoke the fundamental theorem of MDPs for $Q^\pi_2$. We continue like this for all $d$ dimensions of utility. The Q-value of the final space of optimal policies is $(Q^\star_1, ..., Q^\star_d)$ which lexicographically dominates all $Q^\pi$ and is thus uniformly optimal.
\end{proof}

\end{document}